\documentclass[twocolumn, journal]{IEEEtran}
\usepackage{graphicx}
\usepackage{epstopdf}
\usepackage{amsthm}
\theoremstyle{definition}
\usepackage{epsfig}
\usepackage{latexsym}
\usepackage{amsfonts}
\usepackage{here}
\usepackage{rawfonts}
\usepackage[utf8]{inputenc}
\usepackage[T1]{fontenc}
\usepackage{calc}
\usepackage{capitalgreekitalic}
\usepackage{url}
\usepackage{enumerate}
\usepackage{color}
\usepackage[tbtags]{amsmath}
\usepackage{amssymb}
\usepackage{upref}
\usepackage{epic,eepic}
\usepackage{times}
\usepackage{dsfont}
\usepackage{comment}
\usepackage{cite}
\usepackage{bbm}
\usepackage{bm}
\usepackage{amsmath}
\usepackage{cleveref}

\newcommand{\E}{\mathbb{E}}
\newcommand{\prob}{\mathbb{P}}

\allowdisplaybreaks[4]

\newcommand{\R}{\ensuremath{{\mathbb R}}}

\newcommand{\mysection}[1]{\vspace*{-0.35em}\section{#1}\vspace*{-0.35em}}
\newcommand{\mysubsection}[1]{\vspace*{-0.15em}\subsection{#1}\vspace*{-0.15em}}

\renewcommand{\Pr}{\ensuremath{\operatorname{Pr}}}

\newtheorem{theorem}{\bf Theorem}

\newtheorem{lemma}{\bf Lemma}

\newcommand{\assumptionitem}[2]{%
  \refstepcounter{assumption}%
  \item {\it Assumption~\theassumption\ (#1):}\label{#2}%
}
\newcommand{\assumpref}[1]{Assumption~\ref{#1}}

\usepackage{dsfont}
\newcounter{step}
\newlength{\totlinewidth}
  {\end{list}%
  \rule{\linewidth}{1pt}}
\newcounter{substep}

  {\end{list}}

\newlength{\aligntop}
\newlength{\alignbot}
 \makeatletter
\renewenvironment{align}{%
  \vspace{\aligntop}
  \start@align\@ne\st@rredfalse\m@ne
}{%
  \math@cr \black@\totwidth@
  \egroup
  \ifingather@
    \restorealignstate@
    \egroup
    \nonumber
    \ifnum0=`{\fi\iffalse}\fi
  \else
    $$%
  \fi
  \ignorespacesafterend%
  \vspace{\alignbot}\par\noindent
} \makeatother

\IEEEoverridecommandlockouts

\usepackage{algorithm}%,algpseudocode}
\usepackage{algorithmic}

\usepackage{subfigure}

\begin{document}
\pagestyle{empty}
\title{\huge A Query-Driven Communication-Efficient \\ Digital Twins Design for Autonomous Driving   
}
\author{
Nuocheng Yang, Longyu Zhou, Sihua Wang, Changchuan Yin, and {Tony Q.~S. Quek}, \emph{Fellow, IEEE}
\thanks{N. Yang, S. Wang, and C. Yin are with the Beijing Laboratory of Advanced Information Network, and the Beijing Key Laboratory of Network System Architecture and Convergence, Beijing University of Posts and Telecommunications, Beijing 100876, China (emails: \{yangnuocheng, sihuawang, ccyin\}@bupt.edu.cn).}
\thanks{L. Zhou and T. Q.~S. Quek are with the Information Systems Technology and Design Pillar, Singapore University of Technology and Design, 487372, Singapore (emails: zhoulyfuture@outlook.com, tonyquek@sutd.edu.sg).}
% \thanks{M. Chen is with the Department of Electrical and Computer Engineering and Institute for Data Science and Computing, University of Miami, Coral Gables, FL, 33146 USA (email: \protect\url{mingzhe.chen@miami.edu}).}
}
\maketitle

% placeholder — written last
\begin{abstract}
Digital twins (DTs) have become a potential technology to perform risk-free simulation of physical entities for deterministic and high-reliability services in diverse scenarios such as autonomous driving and low-altitude economy.
In the autonomous driving scenario, traditional DT methods that rely solely on vehicle's real-time state synchronization, however, might lead to unacceptable computing and communication consumption for construction of high-fidelity DT with redundant data. 
%safety based on a high-fidelity digital representation of the physical driving environment. 
%Moreover, the DT can accurately imitate mobile physical obstacles (such as vehicles, cyclists, and pedestrians) to infer trajectory adjustment decisions to ensure driving reliability.
%However, traditional push-based state-reporting DT methods ignore obstacle-state selection entirely to vehicles, which make the DT not to receive the reported information passively.
%These methods also face challenges in bandwidth-limited communication environments, as vehicles may upload states that have already been accurately simulated by the DTs while neglecting the DT's needs.
% This approach reduces the transmission overhead by eliminating states that are already accurately captured by the DTs' simulation, thereby significantly reducing communication overhead.
To address this issue, we first propose a query-driven DT architecture to enable the DT to actively request the desired environment data from vehicles based on its simulation result. 
Then, we formulate an optimization problem whose goal is to minimize autonomous driving position error while accounting for DT fidelity and communication constraints. 
% To support the architecture
% this paper proposes a query-driven DT architecture where we enable the DT to positively request desired environment data based on simulation results rather than passively receiving vehicles' information. To support the architecture
We also design a cross-time-step progressive query mechanism to further improve communication efficiency.
% The DT leverages proactive simulation to dispatch coarse and fine-grained queries across different future time steps. This way avoids excessive latency within a single time slot while preserving reliable responses.
% this mechanism dispatches a pair of coarse and fine queries into various time steps for proactive simulation.
% The optimization problem is further  
% through cross-attention over local perception features
%We then analytically characterize how the driving position error is upper-bounded by each obstacle’s impact on the vehicle’s driving plan and the staleness of such impact.
% Based on this, we reformulate the problem as minimizing the upper bound on driving position error by selecting the obstacles that have a strong effect on driving performance.
The simulation results show that our proposed method achieves a $24\%$ reduction in planning position error compared to traditional methods, while reducing communication overhead by $40\%$. 
 
\end{abstract}

\begin{IEEEkeywords}
Digital twins, communication-efficient sensing, autonomous driving.
\end{IEEEkeywords}

\section{Introduction}\label{sec:intro}

Driven by their potential to improve safety and transportation efficiency, end-to-end automated driving (AD) systems that plan ego trajectories based on localized perception have attracted widespread attention \cite{DTAD,jiang2023vad,hu2023uniad}.
However, end-to-end AD is fundamentally constrained by blind spots and hardware capacity, leading to compromised safety \cite{xu2022v2xvit,NEURIPS2022_1f5c5cd0}. 
To fill this gap, digital twins (DTs) have emerged as a promising solution.
By constructing a digital replica of the driving environment without blind spots through collaborative perception, DT enables risk-free simulation and optimal trajectory planning based on abundant computation \cite{DTAD}.
% by constructing a digital driving environment based on collected vehicles' perceptions, which can reduce blind spots.
% And DT can perform risk-free imitation of physical entities and plan vehicles' trajectories in a digital driving environment constructed by collecting vehicles' perceptions . 
% Then, the DT can optimize the vehicle's trajectories through proactive simulation within the digital replica of the physical environment with sufficient computing resources.
Traditional DT methods, however, construct a reliable DT based on frequent state synchronization between the physical and digital worlds, which introduces unacceptable computational and communication overhead.
\begin{figure}[t]
\includegraphics[width=\linewidth]{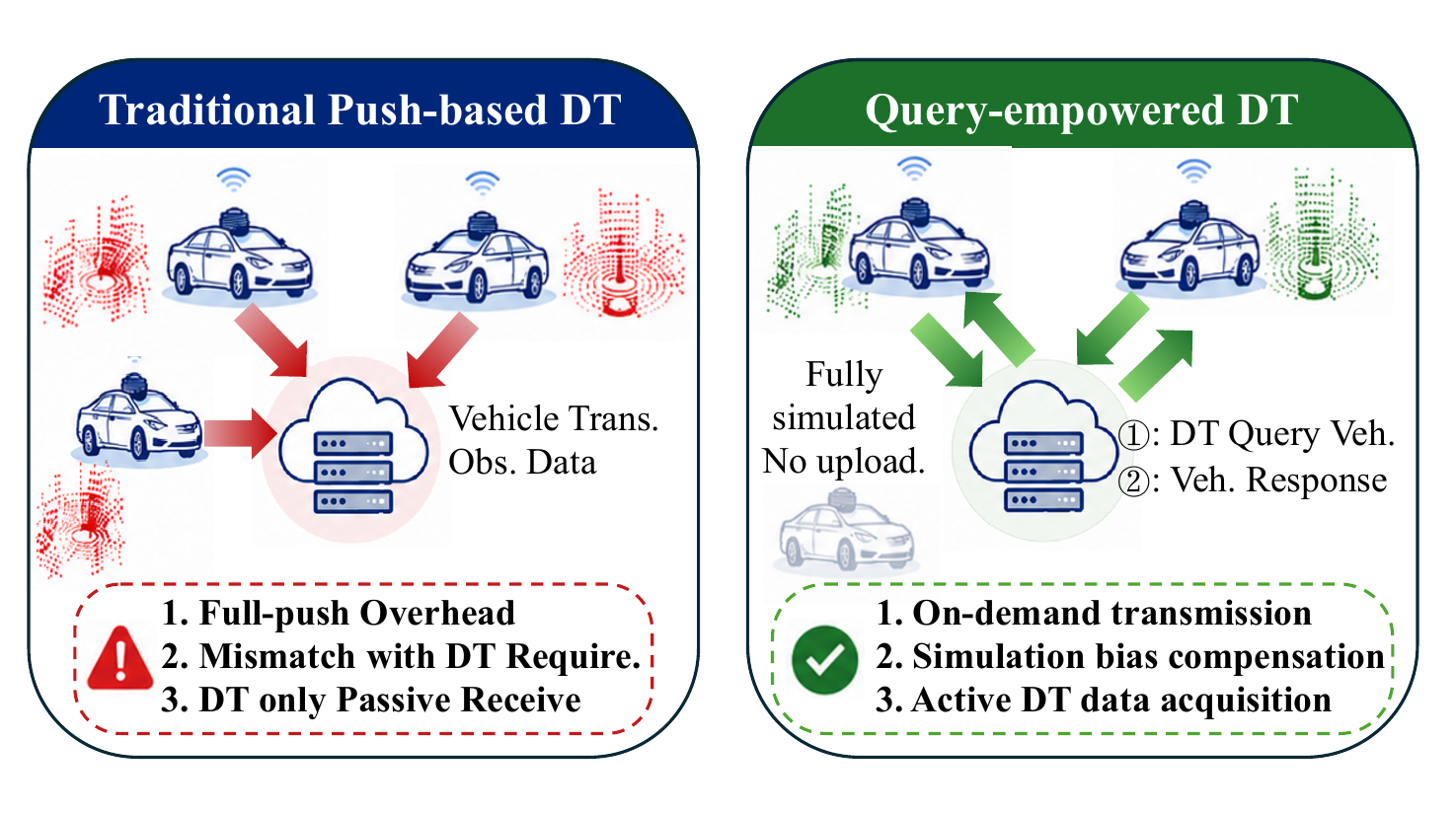}
\caption{Breakthroughs compared to existing work.}
\label{fig1}
\vspace{-0.5cm}
\end{figure}

To tackle this problem, a number of works \cite{zhou2026lowoverhead_dt, ToNAoI,TowardEfficientDeployment,hanzhiyu,HTongTMC,YoloDT,YanshaDGoalOriented} have been studied, which can be broadly categorized into two groups: state synchronization frequency optimization and state compression. 
For the first group, the authors in \cite{zhou2026lowoverhead_dt} proposed an age-of-information (AoI)-based method that only synchronizes the information whose staleness exceeds a predefined threshold.
Similarly, the authors in \cite{ToNAoI} proposed an approximation-optimization-based algorithm to balance state staleness and transmission overhead.
Unlike approaches that account for state staleness, the authors in \cite{TowardEfficientDeployment} proposed an event-triggered strategy that synchronizes only when state changes significantly.
To enable more flexible adoption across state-stale and event-triggered methods, the authors in \cite{hanzhiyu} proposed a multi-agent reinforcement learning (RL) method to optimize the state synchronization frequency.
However, these methods \cite{zhou2026lowoverhead_dt, ToNAoI,TowardEfficientDeployment,hanzhiyu,HTongTMC} relying solely on frequency optimization can lead to the transmission of task-irrelevant states, resulting in degraded transmission efficiency and unreliable planning.
For the second group, the authors in \cite{HTongTMC} proposed a semantic communication-based state compression method that can extract task-relevant features.
Similarly, the authors in \cite{YoloDT} employed a computer vision model to extract critical semantic information during state synchronization.
Beyond semantic-based extraction, the authors in \cite{YanshaDGoalOriented} proposed a goal-oriented feature selection method that can adapt to various sub-tasks.
However, due to their static design, these compression methods \cite{HTongTMC, YoloDT, YanshaDGoalOriented} lack the necessary flexibility to adapt to dynamic driving environments.
Furthermore, these works \cite{zhou2026lowoverhead_dt, hanzhiyu,ToNAoI,TowardEfficientDeployment,HTongTMC,YanshaDGoalOriented,YoloDT} can be summarized as \textit{push-based} DT paradigms, in which only physical entities can decide their perception report frequency and compression strategy.
Consequently, this paradigm fundamentally incurs excessive computational and communication overheads, as physical entities redundantly upload data streams that already overlap with the DT's simulation.
% This paradigm, in turn, may lead to fundamental drawbacks in substantial computational and communication overhead caused by physical entities' upload
% overlooked DT's simulation capability, which leads .
% In other words, the DT, which serves as the central brain for multi-vehicle AD, remains a passive consumer of whatever state the vehicles choose to upload, unable to steer the synchronization process toward the state it simulation lacks.

To bridge this gap, some works \cite{ADT,11317204,PullBasedQuery} have explored a new paradigm called \textit{active-based} DT, in which DT can determine which subset of the state to collect.
The authors in \cite{ADT} modeled environmental change as a partially observable Markov decision process and selected a subset of state information via DTs' simulation at each time step.
Following \cite{ADT}, the authors in \cite{11317204} enable the DT to request physical information when prediction discrepancies exceed a threshold.
Unlike the fix-error-triggered data collection in \cite{11317204}, the authors in \cite{PullBasedQuery} proposed an active request scheduling architecture that employs RL to select a subset of states while accounting for communication resources slot by slot.
By enabling the DT to actively determine which states to collect based on its current simulation belief, \textit{active-based} paradigms fully exploit the DT's simulation capability for task-oriented state synchronization.
However, these works \cite{ADT,11317204,PullBasedQuery} still face three key limitations in the AD scenario.
First, these works modeled the state synchronization problem as selecting a subset of structured or stable state variables (e.g., velocity, position).
% However, in an AD scenario, the states of driving environment and obstacles vary significantly across viewpoints and are often represented in latent feature spaces rather than in structured, stable formats.
However, in an AD scenario, environmental states are typically captured in latent spaces due to viewpoint dependence, rather than in structured formats.
Hence, without stable state variables, these methods are infeasible to directly extract the required information from the latent space.
Second, these methods still overlook the impact of drift caused by DTs' imperfect simulation. 
As imperfect DT simulations inherently introduce prediction errors, subsequent state synchronization decisions may further amplify the perceptual mismatch between the DT's desired and the physical environment. 
Finally, their slot-wise and independent state-collection strategies fail to meet the latency requirements in AD systems. 
% As a result, the resulting end-to-end query-response pipeline may incur substantial latency and .
% Although these active DT methods \cite{ADT,PullBasedQuery,11317204,SemaAwareDigitalTwin} build on the same principle of closing the loop between perception and state collection,但他们面临三个关键问题。
% 首先但是采集的一直都是明文信息，不涉及复杂的系统，或者换句话说，只是一种状态掩码策略，本质上都是利用DT对数据中的部分信息进行掩藏实现主动的采集，而在自动驾驶驱动的数字孪生系统中，车辆，目标，障碍物等的状态从不同视角看存在巨大的差异，并且当前自动驾驶系统并没有先对车辆状态进行表格化的处理（标注速度，位置等），而是都是基于隐空间的表征，无法通过明文的要求描述出来。
% 此外，这些方法没有充分考虑数字孪生系统中存在的漂移问题，因为数字孪生的推演和预测本身会引入误差，而在此基础上进行的数据采集就会出现漂移。
% 最后，这些每一个时隙进行单独信息采集的策略存在巨大的时延，比如难以满足自动驾驶数字孪生对时延的需求。
% However, 
% , enabling these systems to autonomously improve situational awareness, refine their internal models through active exploration, and proactively manage the evolution of the environment.
% However, the 作者对待同步的信息的价值直接进行了估计，并采集价值高的目标，虽然其也利用了DT的仿真能力，但是采集的一直都是明文信息，不涉及复杂的系统。
% 在自动驾驶驱动的数字孪生系统中，车里那个，目标，障碍物等的状态都是基于隐空间的表征，无法通过明文的要求描述出来，并且，他们具有强烈的观测者单一视角的影响，不能作为通用的数据表征。

To enable state collection from unstable latent features, tolerate DTs' drift simulation, and ensure per slot communication latency, we propose a novel query-driven DT architecture.
In particular, the proposed architecture enables DT to request the desired environment data based on its simulation, which can adapt to unstable latent features through a cross-attention mechanism as shown in Fig. \ref{fig1}.
% Specifically, DT first simulates environmental changes and transmits a query tensor to request the state of obstacles that have the greatest impact on the future trajectory planning error.
To further enhance the query-driven DT's tolerance to simulation drift and reduce communication latency per slot, we propose a progressive, cross-timestep query-response mechanism.
Compared to the traditional active DT state-collection architecture that can only collect states from structured, stable formats, the proposed method can collect information from latent feature spaces while accounting for DT drift and latency requirements.
Our contributions are summarized as follows.
\begin{itemize}
    \item 
    We propose a novel query-driven DT architecture that enables DT to actively request the desired information based on its simulation to minimize the trajectory planning error.
    Compared to traditional DT paradigms, the proposed query-driven architecture can effectively enhance on-demand data collection and avoid redundant data transmissions.
    % Meanwhile, by leveraging its simulation capabilities and targeted information acquisition, the DT can efficiently reconstruct a high-fidelity replica of a digital driving environment.
    \item We formulate an optimization problem to minimize the DT's planning position error, accounting for DT fidelity and communication resources.
    To solve this problem, we first analyze each entity's effect on the DT's planning position error through derivation. 
    Based on this, we reformulate the original problem as an entity selection and communication resource allocation problem, which is then solved using a query-response mechanism. 
    \item To further reduce computing overhead during entity selection, we first design an entity's effect estimation model.
    Then, a progressive query method that contains coarse and fine-grained queries is proposed to enhance stability under DT's simulation drift.
    Finally, we propose a cross-timestep query--response mechanism and a joint communication resource scheduling method to fully exploit communication resources across frames, thereby avoiding excessive latency. Our method also reduces trajectory-planning position error by up to $24\%$ compared with traditional algorithms.
\end{itemize}
%Extensive experiments show that our method can effectively reduce communication overhead while reducing trajectory-planning position error by up to $24\%$ compared with traditional algorithms.
% \subsection{Background}
% 我们不写related work，写成Background，只用一小片地方说明DT的作用，以及自动驾驶过程中，为什么感知是连续的状态
% \textit{Notations:} Unless otherwise indicated, matrices are represented by bold capital letters (i.e. $\bf{A}$), vectors are denoted by bold lowercase letters (i.e. $\boldsymbol{v}$), and scalars are denoted by plain font (i.e. $d$). 
% The term $||\boldsymbol{w}||$ donates the L2-norm and $|\boldsymbol{w}|$ represents the L1-norm of $\boldsymbol{w}$.
\mysection{Related Work}\label{sec:related}

\mysubsection{Scene Representation and Collaborative Perception for Autonomous Driving}
Recently, several works have studied end-to-end AD systems that generate trajectory plans by analyzing locally collected, complex, and highly dynamic images and radar data \cite{survey_worldmodel2025, hu2023uniad,jiang2023vad,li2022bevformer}.
The authors in \cite{hu2023uniad} proposed a query-based architecture that passes a shared set of agent and map queries through successive tracking, mapping, motion, and occupancy transformers into a final planner.
% During tracking, mapping, motion, and occupancy, the environment's state (e.g., dynamic obstacles and a static map) is represented as latent features rather than as explicit variables.
The authors in \cite{jiang2023vad} introduced a fully vectorized representation map with vectorized map and agent tokens, encoding the surroundings as compact instance-level latents for planning.
The authors in \cite{li2022bevformer} proposed a bird's-eye-view (BEV) encoder that lifts multi-camera features into a plane via spatial cross-attention and aggregates history through temporal self-attention over recurrent BEV queries.
These works represent the driving environment (e.g., dynamic obstacles and a static map) as continuous, viewpoint-dependent latent features instead of structured variables such as position or velocity, which makes the perceptual state difficult to specify and synchronize explicitly.

To extend perception beyond a single viewpoint, a number of works have explored collaborative perception that fuses such latent features across vehicles~\cite{tian2026uncertainty}.
The authors in \cite{wang2020v2vnet} proposed a spatially-aware graph neural network that warps and message-passes intermediate feature maps among neighboring vehicles to compensate for occlusion and pose misalignment.
The authors in \cite{xu2022v2xvit} introduced a unified vision transformer with heterogeneous multi-agent self-attention and multi-scale window attention to fuse features across vehicles and infrastructure of differing viewpoints.
The authors in \cite{hu2022where2comm} proposed a spatial-confidence-map mechanism that gates communication so that only perception-critical spatial regions are packed and exchanged.
The authors in \cite{Cooptrack} introduced a fully instance-level end-to-end framework with learnable instance association that transmits sparse instance-level features.
Although these works fuse latent perceptual features across viewpoints under communication resources constraints, every state information exchange is determined by the vehicles, without anticipating which states will be most informative for downstream reasoning.
Further, these single-vehicle and peer-to-peer schemes remain constrained by blind spots and provide no central entity that maintains a globally consistent, long-horizon view of the scene.

\mysubsection{Digital Twins for Automated Driving}
Recently, DTs have been widely introduced as digital replicas that can provide a globally consistent, long-horizon view of the scene, integrating multi-vehicle perception for risk-free simulation and proactive control \cite{bariah2023dt_empowered}.
The authors in \cite{wang2024smdt} proposed a smart-mobility DT that fuses LiDAR and camera streams from roadside units over multi-band vehicle-to-everything (V2X) and mobile-edge computing into a cloud replica, on which real-time collision prediction and route planning are executed for connected vehicles.
The authors in \cite{tang2024tvt} formulated a joint optimization of vehicular sensing-and-upload selection and vehicle-DT placement across a cloud--edge hierarchy, and solved it with RL to balance twin accuracy against real-time deployment cost.
These works demonstrate that a vehicular DT is necessary to overcome single-vehicle blind spots, yet they treat the twin as a passive consumer of whatever state the vehicles choose to upload.
Overall, the DT provides the central reasoning entity that single-vehicle perception lacks, but maintaining a faithful twin requires continuously reconstructing the high-dimensional, viewpoint-dependent perceptual state under a tight communication budget.

\mysubsection{Efficient and Active Digital Twin Synchronization}
To reconstruct the DT under limited communication resources, existing works have investigated efficient state synchronization between the physical and digital worlds.
The authors in \cite{zhou2026lowoverhead_dt} proposed an age-of-information-based scheme that synchronizes a state only when its staleness exceeds a predefined threshold.
The authors in \cite{TowardEfficientDeployment} introduced an event-triggered strategy that uploads only significant state changes.
The authors in \cite{HTongTMC} proposed a semantic-communication-based compression that extracts task-relevant features before synchronization.
The authors in \cite{YanshaDGoalOriented} introduced a goal-oriented feature and temporal selection scheme that filters task-irrelevant information.
However, these works reduce synchronization costs on the transmitter side by adopting a push-based paradigm in which the physical entities decide what to upload, overlooking the DT's simulation capability.

From a different perspective, recent works have explored active synchronization, which allows the twin to decide which states to collect based on its current simulation belief~\cite{GoalOrientedAccess}.
The authors in \cite{ADT} modeled environmental change as a partially observable Markov decision process and selected states via active inference to minimize simulation uncertainty.
The authors in \cite{11317204} enabled the twin to request physical information only when the prediction discrepancy exceeds a threshold.
The authors in \cite{PullBasedQuery} proposed a pull-based request-scheduling architecture that employs RL to select a subset of states while accounting for communication resources.
Although these works exploit the twins' belief to drive state collection, they assume the state is structured and stable, overlook the drift introduced by imperfect simulation, and act slot-by-slot without temporal coordination.
Further, these works do not consider the latent-state driving twin under strict latency constraints.

\section{System Model}\label{sec:system}

We consider an AD scenario comprising $N$ vehicles and a central server $S$, which deployed a DT service at the base station that can connect to all vehicles via wireless links.
The set of vehicles is indexed as $\mathcal{N} = \{1,2,\cdots,N\}$.
The DT-enhanced AD system operates through the following three-step workflow, as shown in Fig. \ref{fig:system_model}:
\begin{figure*}[thp]
\centering
\includegraphics[width=0.85\linewidth]{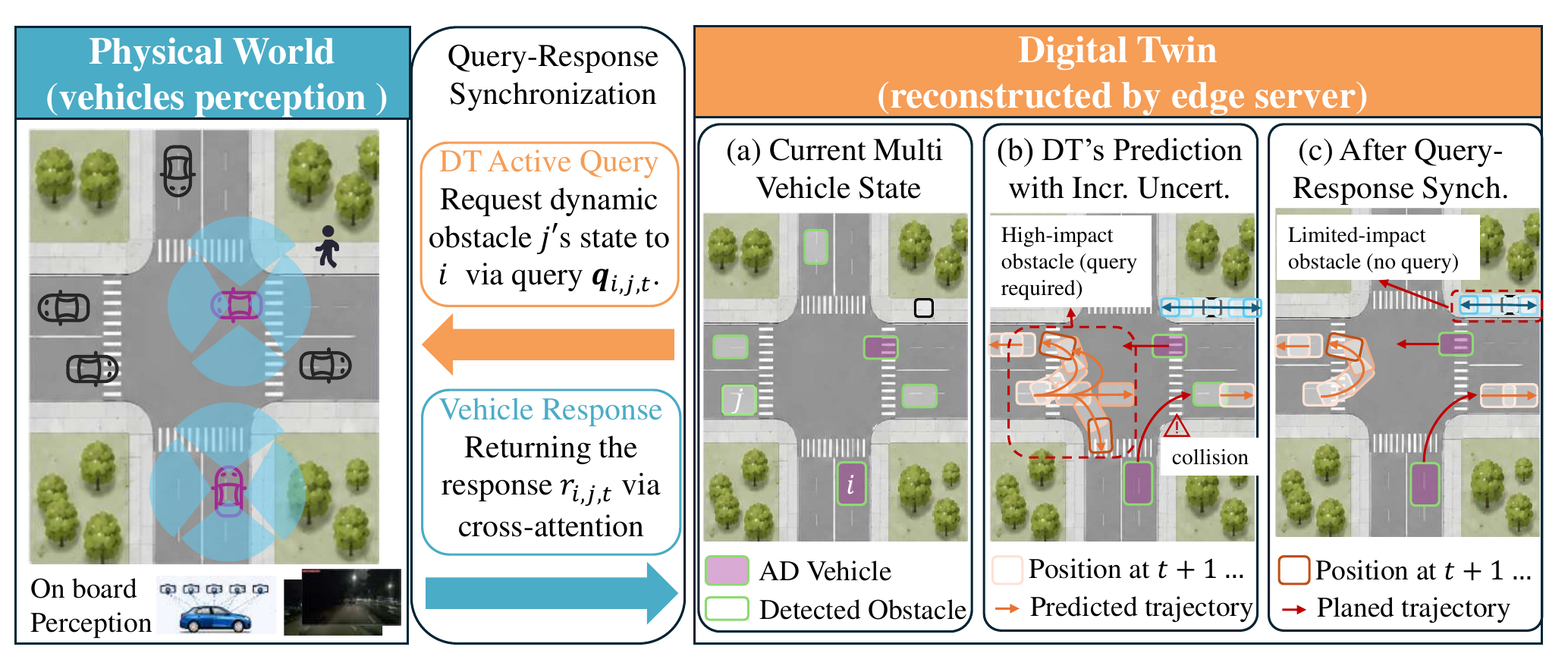}
\caption{Illustration of the proposed query-empowered DT in an AD scenario.}
\label{fig:system_model}
\end{figure*}
\begin{enumerate}
  \item \textbf{Vehicle-Side AD Pipeline:}
    Each vehicle equipped with an onboard autonomous-driving pipeline executes low-level trajectory planning based on single-vehicle observations, which is limited by a single viewpoint.
  \item \textbf{Query-Response-based DT State Synchronization}:
    DT generates and transmits queries to vehicles to request the desired environment state, and then updates its digital environment replicas based on the vehicles' responses.
  \item \textbf{DT Simulation and Trajectory Planning}:
    DT further simulates environmental changes using a simulation process, then executes high-level trajectory planning across viewpoints.
\end{enumerate}

\subsection{Vehicle-Side Autonomous Driving Pipeline}\label{sec:vehicle}

We employ an end-to-end AD model on each vehicle to execute low-level trajectory planning, including perception, prediction, and planning through a unified architecture \cite{hu2023uniad}.

\textbf{Single-Vehicle Perception:} Let $\bm{x}_{i,t} \in \R^{N_{\mathrm{cam}} \times H_{\mathrm{img}} \times W_{\mathrm{img}} \times 3}$ denote the multi-camera images collected by vehicle~$i$ at time step $t$, where $N_{\mathrm{cam}}$ is the number of cameras, $H_{\mathrm{img}}$ and $W_{\mathrm{img}}$ are the spatial dimensions of images.
Then, each vehicle reconstructs a bird's-eye view (BEV) feature map by fusing $N_{\mathrm{cam}}$ images into a comprehensive representation of the surrounding environment, which is given by
\begin{equation}\label{eq:bev_encoder}
  \bm{B}_{i,t} = f_{\mathrm{bev}}(\bm{x}_{i,t}).
\end{equation}
$\bm{B}_{i,t} \;\in\; \R^{H_b \times W_b \times D}$, where $D$ is the feature dimension, $H_b$ and $W_b$ are the spatial dimensions of BEV feature map.

Then, each vehicle $i$ extracts the detectable dynamic obstacles (e.g., vehicles, cyclists, and pedestrians) and static map elements (e.g., road boundaries, lanes) from $\bm{B}_{i,t}$. 
The dynamic obstacles' representations are formulated as
\begin{equation}\label{eq:cross_attn_objects}
  \bm{z}_{i,t}^{\mathrm{obst}} = \bm{q}_{\mathrm{obst}} + \mathrm{softmax}\!\Bigl(\frac{\bm{q}_{\mathrm{obst}}\,\bm{B}_{i,t}^{\top}}{\sqrt{D}}\Bigr)\,\bm{B}_{i,t},
\end{equation}
where $\bm{q}_{\mathrm{obst}}\in \R^{N_{\mathrm{obst}} \times D}$ is the $N_\mathrm{obst}$ learnable obstacle detection queries.
$\bm{z}_{i,t}^{\mathrm{obst}}=[\bm{z}_{i,1,t}^{\mathrm{obst}},\cdots,\bm{z}_{i,N_\mathrm{obst},t}^{\mathrm{obst}}]$ contains each dynamic obstacle's representations.
Similarly, static maps' representations are given by
\begin{equation}\label{eq:cross_attn_map}
  \bm{z}_{i,t}^{\mathrm{map}} = \bm{q}_{\mathrm{map}} + \text{Softmax}\!\Bigl(\frac{\bm{q}_{\mathrm{map}}\,\bm{B}_{i,t}^{\top}}{\sqrt{D}}\Bigr)\,\bm{B}_{i,t},
\end{equation}
where $\bm{q}_{\mathrm{map}}\in\R^{N_{\mathrm{map}} \times D}$ is the $N_\mathrm{map}$ learnable map queries and $\bm{z}_{i,t}^{\mathrm{map}}=[\bm{z}_{i,1,t}^{\mathrm{map}},\dots,\bm{z}_{i,N_\mathrm{map},t}^{\mathrm{map}}]$.

\textbf{Prediction and Planning:} Then, each vehicle $i$ predicts the trajectories of detected obstacles, which are given by
\begin{equation}\label{eq:prediction}
    \begin{aligned}
\{  {\bm{Y}}_{i,t},\, \bm{\pi}_{i,t},&\,\bm{z}_{i,t}^{\mathrm{ctx}},\,\bm{z}_{i,t}^{\mathrm{moti}} \}
  = f_{\mathrm{moti}}(\bm{z}_{i,t}^{\mathrm{obst}},\, \bm{z}_{i,t}^{\mathrm{map}}),
    \end{aligned}
\end{equation}
where ${\bm{Y}}_{i,t} \in \R^{N_{\mathrm{obst}} \times T_f \times P \times 2}$ contains predicted 2D positions over a horizon of $T_f$ steps of $P$ trajectory modes, $\bm{\pi}_{i,t} \in \R^{N_{\mathrm{obst}} \times P}$ stacks the prediction probability of $P$  trajectories.
$\bm{z}_{i,t}^{\mathrm{ctx}}$ is a motion context embedding, $\bm{z}_{i,t}^{\mathrm{moti}}=[\bm{z}_{i,1,t}^{\mathrm{moti}},\cdots,\bm{z}_{i,N_{\mathrm{obst}},t}^{\mathrm{moti}}]\in\R^{N_{\mathrm{obst}}\times P\times D}$ stacks each obstacle's motion representations. Given the predicted trajectory of all detected obstacles, each vehicle $i$ then plans its own trajectory as
\begin{equation}\label{eq:planning}
  \bm{\tau}_{i,t} = f_{\mathrm{plan}}(\bm{z}_{i,t}^{\mathrm{ctx}}, \, {\bm{Y}}_{i,t}, \, \bm{g}_{i,t}),
\end{equation}
where $\bm{g}_{i,t}$ is a navigation command vector.

\subsection{Query-Response-based DT State Synchronization}\label{sec:dt_state}

We assume the DT holds a static map representation $\hat{\bm{z}}^{\mathrm{map}}$, which directly retrieves static map elements from the offline HD-map~\cite{liao2023maptr} and remains unchanged throughout operation.
Hence, to maintain a high-fidelity replica of the physical environment, DT only needs to synchronize the dynamic obstacle states.
However, since each vehicle only maintains local perception $\bm{z}_{i,t}^{\mathrm{obst}}$, which is inherently viewpoint-dependent, it must be further fused across vehicles to obtain a comprehensive representation.  
Next, we propose a query-response-based DTs state-synchronization mechanism for initialization and evaluation with a multi-vehicle perception-fusion method.

At the first frame of DT construction, DT performs a state initialization. 
Let $\mathcal{U}_{j,t}$ denote the vehicle set that can detect obstacle $j$ at time slot $t$, and $M_t$ be the number of tracked obstacles.
The obstacles set $\mathcal{M}_t=\{1,\cdots,M_t\}$.
Specifically, the DT initializes per-vehicle memories by a query-response mechanism as
\begin{equation}\label{eq:init_h}
  \bm{r}_{i,j,0} = \mathrm{CrossAttention}\bigl(\bm{q}_{\mathrm{init}},\;  \{\bm{z}^{\mathrm{loc}}_{i,j,t}\}_{i\in\mathcal{U}_{j,0}},\; \{\bm{z}^{\mathrm{loc}}_{i,j,t}\}_{i\in\mathcal{U}_{j,0}}\bigr),
\end{equation}
where $\mathrm{CrossAttention}(\cdot,\cdot,\cdot)$ is the cross attention operation \cite{vaswani2017attention}.
$\bm{q}_{\mathrm{init}} \in \mathbb{R}^{D}$ is a learnable initialization query shared across all vehicles and obstacles.
$\bm{z}^{\mathrm{loc}}_{i,j,t} = [\,\bm{z}^{\mathrm{obst}}_{i,j,t},\; \bm{z}^{\mathrm{moti}}_{i,j,t}\,]$ is vehicle~$i$'s local representation stack for obstacle~$j$.
The DT then fuses these initial responses into a global belief given by
\begin{equation}
\label{eq:latent_fusion}
  \hat{\bm{z}}_{j,0}^{\text{agg}} = \mathrm{CrossAttention}\bigl( \bm{q}_{\mathrm{fuse}},\;  \{\bm{r}_{i,j,0}\}_{i\in\mathcal{U}_{j,0}},\; \{\bm{r}_{i,j,0}\}_{i\in\mathcal{U}_{j,0}}\bigr),
\end{equation}
where $\bm{q}_{\mathrm{fuse}} \in \mathbb{R}^{D}$ is the feature for fusing and \(\hat{\bm{z}}_t^{\mathrm{agg}}=[\hat{\bm{z}}_{1,t}^{\mathrm{agg}},\cdots,\hat{\bm{z}}_{M_t,t}^{\mathrm{agg}}]\).

In the subsequent frame $t$ after initialization, DT must actively select a subset of obstacles to synchronize their states while the remaining obstacles are left unsynchronized under limited communication resources.
This decision is formalized through indicator $u_{i,j,t} \in \{0,1\}$, where $u_{i,j,t}=1$ implying that DT query vehicle $i$ for obstacles $j$' state at frame $t$, and $u_{i,j,t}=0$, otherwise.
Then we can define per-obstacle's staleness to measure its effectiveness as
\begin{equation}\label{eq:staleness}
  \Delta_{j,t} = t - \max\Bigl\{\, \tau \;\Big|\; \sum_{i \in \mathcal{U}_{j,\tau}} u_{i,j,\tau} > 0 \,\Bigr\}.
\end{equation}

When DT select vehicle $i$ for state of obstacle $j$ (i.e. $u_{i,j,t}=1$), the DT generate and transmit query $\bm{q}_{i,j,t}\in\R^{d\times D}$ to vehicle~$i$ with $d \ll D$. Upon receiving the query, vehicle~$i$ computes the response $\bm{r}_{i,j,t}$ via the cross-attention as
% \begin{equation}\label{eq:q2_readout}
  % \bm{r}_{i,j,t} \;=\; \text{softmax}\left(\frac{(\bm{W}_{k}\bm{z}^{\mathrm{loc}}_{i,j,t})^{\top}\bm{q}_{i,j,t}}{\sqrt{d_q}}\right)\bigl(\bm{W}_{v}\bm{z}^{\mathrm{loc}}_{i,j,t}\bigr),
% \end{equation}
\begin{equation}\label{eq:q2_readout}
  \bm{r}_{i,j,t} \;=\; \mathrm{CrossAttention}\bigl( \bm{q}_{i,j,t},\;  \bm{z}^{\mathrm{loc}}_{i,j,t},\; \bm{z}^{\mathrm{loc}}_{i,j,t}\bigr).
\end{equation}
% where $\bm{W}_{k}\!\in\!\R^{d_q\times D}$, $\bm{W}_{v}\!\in\!\R^{d_v\times D}$ are learnable shared projections matrix with $d_v \ll D$.

The DT then fuses all available per-vehicle responses into the global belief via the same fusion mechanism as in initialization:
\begin{equation}\label{eq:per_frame_fusion}
  \hat{\bm{z}}_{j,t}^{\mathrm{agg}} = \mathrm{CrossAttention}\bigl( \bm{q}_{\mathrm{fuse}},\;  \{\bm{r}_{i,j,t}\}_{i\in\mathcal{U}_{j,t}},\; \{\bm{r}_{i,j,t}\}_{i\in\mathcal{U}_{j,t}}\bigr).
\end{equation}

Given the global belief $\{\hat{\bm{z}}_{1,t}^{\mathrm{agg}},\cdots,\hat{\bm{z}}_{M_t,t}^{\mathrm{agg}}\}$, the motion model predicts multimodal trajectories as
\begin{equation}\label{eq:dt_traj_pred}
  \hat{\bm{Y}}_t,\; \hat{\bm{\pi}}_t,\; \hat{\bm{z}}_t^{\mathrm{ctx}},\; \hat{\bm{z}}_t^{\mathrm{moti}}
  =
  f_{\mathrm{moti}}(\hat{\bm{z}}_t^{\mathrm{agg}},\; \hat{\bm{z}}^{\mathrm{map}}),
\end{equation}
where
\(\hat{\bm{Y}}_t \in \mathbb{R}^{M_t\times P \times T_f \times 2}\)
contains \(P\) candidate trajectories for each obstacle over a prediction horizon of \(T_f\) frames.
The matrix
\(\hat{\bm{\pi}}_t=[\hat{\bm{\pi}}_{1,t},\cdots,\hat{\bm{\pi}}_{M_t,t}]
\in [0,1]^{M_t \times P}\)
denotes the corresponding mode probabilities, where
\(\sum_{p=1}^{P} [\hat{\bm{\pi}}_{j,t}]_{p}=1\).
Based on this motion context, the DT plans a trajectory for each vehicle as
\begin{equation}\label{eq:dt_planning}
  \hat{\bm{\tau}}_{i,t} = f_{\mathrm{plan}}(\hat{\bm{z}}_t^{\mathrm{ctx}},\; \hat{\bm{Y}}_{t},\; \bm{g}_{i,t}).
\end{equation}

While obstacle $j$ has not been queried for several frames, DT cannot treat its aggregated state as a deterministic latent vector.
Instead, the state of obstacle $j$ should be regarded as a belief distribution induced by the multimodal trajectory hypotheses predicted in (\ref{eq:dt_traj_pred}).
Specifically, for each motion mode \(p\in\{1,\ldots,P\}\), the DT obtains a mode-conditioned belief branch as
\begin{equation}\label{eq:mode_conditioned_belief}
  \hat{\bm{z}}_{j,t+1}^{\mathrm{agg},(p)}
  =
  f_{\mathrm{simulate}}\bigl(
  \hat{\bm{z}}_{j,t}^{\mathrm{agg}},
  \hat{\bm{Y}}_{j,t}^{(p)},
  \hat{\bm{z}}^{\mathrm{map}}
  \bigr).
\end{equation}
Since the true motion mode of an unqueried obstacle is unknown, the DT can only propagate the state to the next frame, which can be interpreted as a random variable with a categorical distribution over modes.
Equivalently, the DT sample a trajectory $s_{j,t} \sim \mathrm{Categorical}(\hat{\bm{\pi}}_{j,t})$, and obtain one realization of the next aggregated state as $\hat{\bm{z}}_{j,t+1}^{\mathrm{agg},(s_{j,t})}$.
% The full propagated belief at the next frame can therefore be represented as a categorical mixture of all mode-conditioned belief branches as
% \begin{equation}\label{eq:belief_mixture}
%   \mathcal{B}_{j,t+1}^{\mathrm{agg}}
%   =
%   \left\{
%   \left(
%   \hat{\bm{z}}_{j,t+1}^{\mathrm{agg},(p)},
%   [\hat{\bm{\pi}}_{j,t}]_p
%   \right)
%   \right\}_{p=1}^{P}.
% \end{equation}
Over $\Delta_{j,t}$ unqueried frames, this belief propagation is applied recursively.
Then, the DT belief of obstacle $j$ is induced by a sequence of sampled motion modes as
\begin{equation}
  \bm{s}_{j,t-\Delta_{j,t}:t-1}
  =
  \left(
  s_{j,t-\Delta_{j,t}},
  \ldots,
  s_{j,t-1}
  \right).
\end{equation}
The probability of $\bm{s}_{j,t-\Delta_{j,t}:t-1}$ is given by
\begin{equation}\label{eq:probability}
  \Pr\!\left(
  \bm{s}_{j,t-\Delta_{j,t}:t-1}
  \right)
  =
  \prod_{\tau=t-\Delta_{j,t}}^{t-1}
  [\hat{\bm{\pi}}_{j,\tau}]_{s_{j,\tau}}.
\end{equation}
The corresponding belief branch at frame \(t\) is written compactly as
\begin{equation}
    \begin{aligned}
      &\hat{\bm{z}}_{j,t}^{\mathrm{agg},\bm{s}_{j,t-\Delta_{j,t}:t-1}} 
      \\ &=
      f_{\mathrm{simulate}}^{\Delta_{j,t}}
      \left(
      \hat{\bm{z}}_{j,t-\Delta_{j,t}}^{\mathrm{agg}},
      \left\{
      \hat{\bm{Y}}_{j,\tau}^{(s_{j,\tau})}
      \right\}_{\tau=t-\Delta_{j,t}}^{t-1},
      \hat{\bm{z}}^{\mathrm{map}}
      \right).
    \end{aligned}
\end{equation}
Thus, the DT belief over an unqueried obstacle evolves from a synchronized latent state into a distribution over possible latent states.
This distribution is recursively induced by the motion-mode sequence, and its uncertainty accumulates as the synchronization gap increases.
% \begin{equation}\label{eq:tree_step}
%   \hat{\bm{z}}_{j,t}^{\mathrm{agg},(p_{t-\Delta_{j,t}},\cdots,p_{t})} = f_{\mathrm{simulate}}\bigl(\hat{\bm{z}}_{j,t-\Delta_{j,t}}^{\mathrm{agg},(p_{t-\Delta_{j,t}},\cdots,p_{t-1})},\; \hat{\bm{Y}}_{j,t}^{(p_{t})},\; \hat{\bm{z}}^{\mathrm{map}}\bigr).
% \end{equation}

\subsection{Wireless Channel Model}
To account for query-response transmission overhead, we adopt a standard urban large-scale channel model.
Let the distance between vehicle~$i$ at position $\bm{v}_{i,t} \in \R^3$ and DT server $S$ at position $\bm{p}_{\mathrm{server}} \in \R^3$ is $d^{\mathrm{2D}}_{i,t} = \|[\bm{v}_{i,t}]_{1:2} - [\bm{p}_{\mathrm{server}}]_{1:2}\|_2$ and $d^{\mathrm{3D}}_{i,t} = \|\bm{v}_{i,t} - \bm{p}_{\mathrm{server}}\|_2$.
The line-of-sight (LOS) probability is distance-dependent in the urban scenario 2D plane as $\prob_{\mathrm{LOS}}(d^{\mathrm{2D}}_{i,t}) = \min(d_1/{d^{\mathrm{2D}}_{i,t}}, 1)(1 - e^{-{d^{\mathrm{2D}}_{i,t}}/d_2}) + e^{-{d^{\mathrm{2D}}_{i,t}}/d_2}$, where $d_1$ and $d_2$ are scenario-dependent parameters.
% \begin{equation}
% \label{eq:pathloss}
%   \mathrm{PL}({d^{\mathrm{3D}}_{i,t}}) \!\!=\!\!\! \begin{cases}    \alpha_{\mathrm{L}}\!\!+\!\!\beta_{\mathrm{L}}\log_{10}({d^{\mathrm{3D}}_{i,t}}) + \gamma\log_{10}(f_c), {\text{w.p. } \prob_{\mathrm{LOS}}(d^{\mathrm{2D}}_{i,t})}, \\  \alpha_{\mathrm{N}}\!\!+\!\!\beta_{\mathrm{N}}\log_{10}({d^{\mathrm{3D}}_{i,t}}) + \gamma\log_{10}(f_c), \text{otherwise},
%   \end{cases}
% \end{equation}
% where $d_1$ and $d_2$ are scenario-dependent parameters.
Accordingly, the large-scale path loss is
\begin{equation}
\label{eq:pathloss}
\mathrm{PL}_{i,t}
=
\begin{cases}
\alpha_{\mathrm{L}}
+
\beta_{\mathrm{L}}\log_{10}(d^{\mathrm{3D}}_{i,t})
+
\gamma\log_{10}(f_c),
& \text{w.p. } \prob_{\mathrm{LOS}}(d^{\mathrm{2D}}_{i,t}),\\
\alpha_{\mathrm{N}}
+
\beta_{\mathrm{N}}\log_{10}(d^{\mathrm{3D}}_{i,t})
+
\gamma\log_{10}(f_c),
& \text{otherwise},
\end{cases}
\end{equation}
where $f_c$ is the carrier frequency and {$\alpha_{\mathrm{L}}, \beta_{\mathrm{L}}, \alpha_{\mathrm{N}}, \beta_{\mathrm{N}}$}, $\gamma$ are model-specific coefficients.

Let
$\boldsymbol{A}_{t}=[A_{1,t},\ldots,A_{N,t}]$
denote the RB allocation vector at frame $t$, where
$\sum_{i=1}^{N}A_{i,t}\le R$ and $A_{i,t}\in\{0,\ldots,R\}$.
Given $A_{i,t}$ RBs, the transmission rate of vehicle $i$ is
\begin{equation}
\label{eq:rate}
r_{i,t}= A_{i,t}B_w
\log_2
\left(
1+
\frac{P_{\mathrm{tx}}}
{B_w N_0 L_{i,t}}
\right),
\end{equation}
where $B_w$ is the bandwidth of each RB, $P_{\mathrm{tx}}$ is the transmit power,
$N_0$ is the noise power spectral density, and
$L_{i,t}=10^{\mathrm{PL}_{i,t}/10}$ is the linear-scale path loss.
Therefore, the delay for transmitting the query-responses assigned to vehicle $i$ is
\begin{equation}
\label{eq:delay}
\mathcal{T}(\boldsymbol{A}_{t},\bm{v}_{i,t},C)
=
\frac{\sum_{j\in\mathcal{O}_{i,t}}u_{i,j,t}C}{r_{i,t}},
\end{equation}
where $C$ is the payload size of each response and
\textcolor{black}{$\mathcal{O}_{i,t}=\{j\in\mathcal{M}_t\mid u_{i,j,t}=1\}$ denotes the set of obstacles assigned to vehicle $i$.
% 是不是没定义\mathcal{O}_{i,t}？\sum_{j\in\mathcal{O}_{i,t}}C
}

% The data transmission delay is 
% \begin{equation}\label{eq:rate}
%   \mathcal{T}({\boldsymbol{A}_{t}},\bm{v}_{i,t},C) = \sum_{j=1}^{|\mathcal{O}_{i,t}|}\frac{u_{i,j,t} \cdot C}{{{A}_{i,t}}\, B_w \log_2\!\left(1 + \frac{P_{\mathrm{tx}}}{{A}_{i,t}B_w N_0\mathrm{PL}({d^{\mathrm{3D}}_{i,t}})}\right)} 
% \end{equation}
% where $\boldsymbol{A}_{t}=\left[{A}_{1,t},\dots,{A}_{N,t}\right]$ is the resources blocks (RBs) allocation vector of the DT server at frame $t$ and $\sum_{i=1}^{N} {A}_{i,t}\leq R$ with ${A}_{i,t}\in\left[0,\cdots,R\right]$.
% $R$ is the total number of RBs, and $C$ is the transmission data size vehicle $i$ needs to transmit between vehicles and DT server at time frame $t$.
% $\mathcal{O}_{i,t}\subseteq \mathcal{M}_t$ is the index set of obstacle that vehicles $i$ can observer.
% $B_w$ is the unit channel bandwidth of each RB, $P_{\mathrm{tx}}$ is the transmit power, and $N_0$ is the noise power spectral density.
% $u_{i,j,t}\in\left{0,1\right}$ indicate the state synchronization vector with $u_{i,j,t}=1$ implying that DT require vehicle $i$ for obstacle $j$'s state at frame $t$ (i.e., ${A}_{i,t}>0$), and $u_{i,j,t}=0$, otherwise.

\subsection{Problem Formulation}\label{sec:problem}
We formulate an optimization problem to minimize the trajectory planning gap between the DT constructed via the proposed query-response state synchronization method and an idealized DT, under communication resource constraints, which is given by
\begin{align}
  \min_{\bm{u}_t,\, \{\bm{q}_{i,j,t}\},\, \boldsymbol{A}_{t}} & \quad
  \sum\limits_{{t} \in \mathcal{T}} \sum\limits_{i=1}^{N}  
  \E[\|\hat{\bm{\tau}}_{i,t} - \bm{\tau}_{i,t}^{\ast}\|^2], \label{eq:objective} \\
  \text{s.t.} \quad & \mathcal{T}({\boldsymbol{A}_{t}},\bm{v}_{i,t},C) \leq \mathcal{T}_{\max}, \forall\, i \in \mathcal{N}, \forall t\in \mathcal{T}, \tag{\theequation a}\label{eq:c1}\\
  & \E[\|\hat{\bm{z}}_{j,t}^{\ast} - \hat{\bm{z}}_{j,t}^{\mathrm{agg}}\|^2] \leq \sigma_{\max}, \forall\, j \in \mathcal{M}_t,
  \tag{\theequation b}\label{eq:c3} \\
  & \sum\limits_{i=1}^{N} {A}_{i,t}\leq R, \forall t\in \mathcal{T}, \tag{\theequation c}\label{eq:c4}
\end{align}
where $\bm{\tau}_{i,t}^{\ast}$ denotes the trajectory produced based on an idealized DT that possesses infinite fidelity and instantaneous synchronization. 
Similarly, we define the idealized state $\hat{\bm{z}}_{j,t}^{\ast}$ as the aggregated obstacle $j$'s representation with idealized DT.
$\mathcal{T}_{\max}$ and $\sigma_{\max}$ are the maximum per-frame latency and allowable state reconstruction error.
Constraint~\eqref{eq:c1} bounds the per-vehicle communication delay. 
Constraint~\eqref{eq:c3} constrains the per-obstacle reconstruction gap.
Constraint~\eqref{eq:c4} ensures the available resource blocks with maximum $R$ at each time slot.
The problem in~\eqref{eq:objective} is challenging to solve due to two key issues.
First, the relationship between the trajectory planning gap and each obstacle is difficult to explicitly characterize, making direct optimization intractable.
Second, the decision space is highly coupled and combinatorial, jointly involving active query design $\bm{q}_{i,j,t}$, and resource allocation $\boldsymbol{A}_t$.

\section{Problem Analysis}

To simplify problem (\ref{eq:objective}), we first analyze the impact of each obstacle $j$ on the trajectory planning gap.
% Specifically, we can sample a trajectory index $s_{j,t}$ according to the trajectory probability $\hat{\bm{\pi}}_{j,t}$ from vehicle $j$ and obtain the corresponding trajectory realization as $[\hat{\bm{Y}}_{j,t}']_{s_{j,t}}$.
To isolate the impact of obstacle $j$, we make the following assumptions as done in \cite{Ass11,Ass12,Ass21,Ass22}:
\begin{itemize}
\assumptionitem{Obstacles Mutual Independence}{assump:obstacle_independence}
The trajectory distributions $\hat{\bm{\pi}}_{j,t}$ of different obstacles
$j \in \mathcal{M}_t$ are mutually independent, which is given by
\begin{equation}
\Pr\!\left(\hat{\bm{Y}}_{1,t}^{s_{1,t}},\dots,\hat{\bm{Y}}_{M_t,t}^{s_{M_t,t}}\right)
  =
  \prod_{j=1}^{M_t}
  [\hat{\bm{\pi}}_{j,t}]_{s_{j,t}} .
\end{equation}

\assumptionitem{First-Order Planner Separability}{assump:planner_separability}
The trajectory planning of vehicle $i$ under distinct obstacle-mode perturbations is approximately additive, which is given by
\begin{equation}\label{eq:planner_separability}
\hat{\bm{\tau}}_{i,t}^{(p_1,\dots,p_{M_t})}
\approx
\sum_{j=1}^{M_t}
\hat{\bm{\tau}}_{i,t}^{(p_j)} ,
\end{equation}
where $\hat{\bm{\tau}}_{i,t}^{(p_1,\dots,p_{M_t})}$ is the DT-planned trajectory of vehicle $i$ while each obstacle $j$ select trajectory $p_j$.
Similarly, $\hat{\bm{\tau}}_{i,t}^{(p_j)}$ is the DT-planned trajectory under its current belief and with obstacle $j$ choosing trajectory $p_j$, and other obstacles in its mixture-mean prediction. 
Inter-obstacle interaction effects are assumed to be negligible.
\end{itemize}
% \begin{itemize}
% \item {\it{Assumption~1}:} (Obstacles' mutually independent)
% The trajectory distributions $\hat{\bm{\pi}}_{j,t}$ of different obstacles $j \in \mathcal{M}_t$ are mutually independent which is give by
% \begin{equation}
% \Pr\!\left(\hat{\bm{Y}}_{1,t}^{s_{1,t}},\dots,\hat{\bm{Y}}_{M_t,t}^{s_{M_t,t}}\right)
%   =
%   \prod_{j=1}^{M_t}
%   [\hat{\bm{\pi}}_{j,\tau}]_{s_{j,\tau}}.
% \end{equation}
% \item {\it{Assumption~2}:} (First-Order Planner Separability)
% The planner's trajectory planning for vehicle $i$ to distinct obstacles' mode perturbations is approximately additive, which is given by
% \begin{equation}\label{eq:planner_separability}
% \hat{\bm{\tau}}_{i,t}^{(p_1,\dots,p_{M_t})} \approx \sum_{j=1}^{M_t} \hat{\bm{\tau}}_{i,t}^{(j,p_j)},
% \end{equation}
% where $\hat{\bm{\tau}}_{i,t}$ is the DT-planned trajectory under its current belief (i.e., with every obstacle at its mixture-mean prediction). Inter-object interaction effects are assumed to be negligible.
% \end{itemize}

Based on these assumptions, we can analyze the effects of the obstacles on the vehicles independently.
We assume the DT generates the corresponding planned trajectory for vehicle $i$ as
\begin{equation}
    \hat{\bm{\tau}}_{i,t}^{(p_j)} 
    = f_{\mathrm{plan}}\bigl(\hat{\bm{z}}_t^{\mathrm{ctx}},\, \tilde{\bm{Y}}_{t}^{(p_j)}, \, \bm{g}_{i,t}\bigr),
\end{equation}
where $\tilde{\bm{Y}}_{t}^{(p_j)}\in \mathbb{R}^{M_t\times T_f \times 2}$ denotes the counterfactual joint trajectory set.
Specifically, in $\tilde{\bm{Y}}_{t}^{(p_j)}$, only obstacle $j$ is set to its $p$-th trajectory realization, while all other obstacles are fixed to their nominal trajectories, which is given by 
\begin{equation}
\bigl[\tilde{\bm{Y}}_{t}^{(p_j)}\bigr]_{\ell} =
  \begin{cases}
  \hat{\bm{Y}}_{j,t}^{(p_j)}, & \ell = j, \\[4pt]
  \bar{\bm{Y}}_{\ell,t}, & \ell \neq j,
  \end{cases}
\end{equation}
where
$\bar{\bm{Y}}_{\ell,t}
=
\sum_{q=1}^{P}\hat{\pi}_{\ell,t}^{q}\hat{\bm{Y}}_{\ell,t}^{(q)}$
is the nominal trajectory of obstacle $\ell$.
Similarly, we can also define the ideal trajectory of vehicle $i$ as under the full state information collection, which is given by
\begin{equation}
\bm{\tau}_{i,t}^{\ast}=\hat{\bm{\tau}}_{i,t}^{(p_{1}^{\star},\dots,p_{M_{t}}^{\star})},
\end{equation}
where $p_{j}^{\star}$ indicate the ground-truth trajectory of obstacle $j$.
Since the ground-truth trajectory of obstacle~$j$ is unavailable, DT can only request $\bm{z}_{j,t}^{\mathrm{obstacle}}$ from~\eqref{eq:prediction} that can govern its distribution.

After analyzing the obstacle's effect, we now turn to the DT's drift error.
To this end, we then make the following assumptions as done in \cite{Ass32,Ass41,Ass43,Ass51}:
\begin{itemize}
\assumptionitem{Simulator Stability}{assump:simulator_stability}
\textcolor{black}{The DT simulator $f_{\mathrm{simulate}}$ is non-expansive and there exists
$L_{\mathrm{sim}}\leq1$ such that
$\|f_{\mathrm{simulate}}(\hat{\bm{z}},\hat{\bm{Y}},\hat{\bm{z}}^{\mathrm{map}})
-
f_{\mathrm{simulate}}(\hat{\bm{z}}',\hat{\bm{Y}},\hat{\bm{z}}^{\mathrm{map}})\|_2
\leq
L_{\mathrm{sim}}\|\hat{\bm{z}}-\hat{\bm{z}}'\|_2$.}

\assumptionitem{Planner Lipschitz Continuity}{assump:planner_lipschitz}
$f_{\mathrm{plan}}$ is $L_{\mathrm{plan}}$-Lipschitz in its context embedding:
$\|f_{\mathrm{plan}}(\hat{\bm{z}}^{\mathrm{ctx}},\hat{\bm{Y}},\bm{g})
-
f_{\mathrm{plan}}(\hat{\bm{z}}^{\mathrm{ctx},\ast},\hat{\bm{Y}},\bm{g})\|_2
\leq
L_{\mathrm{plan}}
\|\hat{\bm{z}}^{\mathrm{ctx}}-\hat{\bm{z}}^{\mathrm{ctx},\ast}\|_2$.

\assumptionitem{Bounded Mode Divergence}{assump:bounded_mode_divergence}
For any two trajectory modes $p,q$, the simulation outputs differ by at most
$\delta_{\mathrm{mode}}$ as
$\|f_{\mathrm{simulate}}(\hat{\bm{z}},\hat{\bm{Y}}^{(p)},\hat{\bm{z}}^{\mathrm{map}})
-
f_{\mathrm{simulate}}(\hat{\bm{z}},\hat{\bm{Y}}^{(q)},\hat{\bm{z}}^{\mathrm{map}})\|_2
\leq
\delta_{\mathrm{mode}}$.

\assumptionitem{Bounded Compression Error}{assump:bounded_compression_error}
The cross-attention query-response cycle in~\eqref{eq:per_frame_fusion}
introduces at most $\eta_{\mathrm{QR}}$ state reconstruction error as
$\|\hat{\bm{z}}_{j,t}^{\mathrm{agg}}
-
\hat{\bm{z}}_{j,t}^{\mathrm{agg},\ast}\|_2^2
\leq
\eta_{\mathrm{QR}}$,
when $\Delta_{j,t}=0$.
\end{itemize}
% \begin{itemize}
% \item {\it{Assumption$~3$} (Simulator Stability):}
% \textcolor{black}{The DT simulator $f_{\mathrm{simulate}}$ is non-expansive: there exists $L_{\mathrm{sim}}\leq1$ such that
% $\|f_{\mathrm{simulate}}(\hat{\bm{z}},\hat{\bm{Y}},\hat{\bm{z}}^{\mathrm{map}}) - f_{\mathrm{simulate}}(\hat{\bm{z}}',\hat{\bm{Y}},\hat{\bm{z}}^{\mathrm{map}})\|_2 \leq L_{\mathrm{sim}}\|\hat{\bm{z}}-\hat{\bm{z}}'\|_2$.}

% \item {\it{Assumption$~4$} (Planner Lipschitz Continuity):}
% \textcolor{black}{$f_{\mathrm{plan}}$ is $L_{\mathrm{plan}}$-Lipschitz in its context embedding: $\|f_{\mathrm{plan}}(\hat{\bm{z}}^{\mathrm{ctx}},\hat{\bm{Y}},\bm{g}) - f_{\mathrm{plan}}(\hat{\bm{z}}^{\mathrm{ctx},\ast},\hat{\bm{Y}},\bm{g})\|_2 \leq L_{\mathrm{plan}}\|\hat{\bm{z}}^{\mathrm{ctx}}-\hat{\bm{z}}^{\mathrm{ctx},\ast}\|_2$.}

% \item {\it{Assumption$~5$} (Bounded Compression Error):}
% \textcolor{black}{The cross-attention query-response cycle in~\eqref{eq:per_frame_fusion} introduces at most $\eta_{\mathrm{QR}}$ state reconstruction error as $\|\hat{\bm{z}}_{j,t}^{\mathrm{agg}} - \hat{\bm{z}}_{j,t}^{\mathrm{agg},\ast}\|_2^2 \leq \eta_{\mathrm{QR}}$ when $\Delta_{j,t}=0$.}

% \item {\it{Assumption$~6$} (Bounded Mode Divergence):}
% \textcolor{black}{For any two trajectory modes $p,q$, the simulation outputs differ by at most $\delta_{\mathrm{mode}}$ as}
% $\|f_{\mathrm{simulate}}(\hat{\bm{z}},\hat{\bm{Y}}^{(p)},\hat{\bm{z}}^{\mathrm{map}}) - f_{\mathrm{simulate}}(\hat{\bm{z}},\hat{\bm{Y}}^{(q)},\hat{\bm{z}}^{\mathrm{map}})\|_2 \leq \delta_{\mathrm{mode}}$.
% \end{itemize}

Based on these assumptions, we can prove that the DTs' state drift is upper-bounded by Lemma~\ref{lem:drift}.
\begin{lemma}\label{lem:drift}
\textcolor{black}{Under Assumptions~3--6, the DT's state reconstruction error for an obstacle with staleness $\Delta_{j,t}$ satisfies}
\begin{equation}\label{eq:state_drift}
\;\|\hat{\bm{z}}_{j,t}^{\mathrm{agg}} - \hat{\bm{z}}_{j,t}^{\mathrm{agg},\ast}\|_2^2 \;\leq\; 2\eta_{\mathrm{QR}} + 2\Delta_{j,t}^2\delta_{\mathrm{mode}}^2,
\end{equation}
\textcolor{black}{where $\hat{\bm{z}}_{j,t}^{\mathrm{agg},\ast}$ is the idealized aggregated state under instantaneous, lossless synchronization.} %We provide the proof in the Appendix A.
\end{lemma}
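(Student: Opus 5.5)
The plan is to introduce an intermediate ``shadow'' trajectory that isolates the two error sources and then combine them with the elementary inequality $\|\bm{a}+\bm{b}\|_2^2\le 2\|\bm{a}\|_2^2+2\|\bm{b}\|_2^2$, which is where the factors of $2$ come from. Let $t_0=t-\Delta_{j,t}$ be the last synchronized frame. Define the ideal trajectory $\bm{z}^\ast_\tau:=\hat{\bm{z}}_{j,\tau}^{\mathrm{agg},\ast}$. We assume, as the idealized DT is intended, that this trajectory evolves under the same simulator $f_{\mathrm{simulate}}$ driven by the true modes $p^\star_\tau$, i.e. $\bm{z}^\ast_{\tau+1}=f_{\mathrm{simulate}}(\bm{z}^\ast_\tau,\hat{\bm{Y}}_{j,\tau}^{(p^\star_\tau)},\hat{\bm{z}}^{\mathrm{map}})$. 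Define the DT trajectory $\hat{\bm{z}}_\tau$ as the branch driven by the sampled modes $s_{j,\tau}$ and started from $\hat{\bm{z}}_{j,t_0}^{\mathrm{agg}}$. Finally, define the auxiliary trajectory $\tilde{\bm{z}}_\tau$, started at the ideal state $\bm{z}^\ast_{t_0}$ but propagated with the DT's sampled modes $s_{j,\tau}$. Then
\[
\|\hat{\bm{z}}_t-\bm{z}^\ast_t\|_2^2\le 2\|\hat{\bm{z}}_t-\tilde{\bm{z}}_t\|_2^2+2\|\tilde{\bm{z}}_t-\bm{z}^\ast_t\|_2^2 .
\]

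\textbf{Compression term.} $\hat{\bm{z}}_\tau$ and $\tilde{\bm{z}}_\tau$ use identical mode inputs at every step and differ only in their initial states. Applying Assumption~\ref{assump:simulator_stability} $\Delta_{j,t}$ times gives
\[
\|\hat{\bm{z}}_t-\tilde{\bm{z}}_t\|_2\le L_{\mathrm{sim}}^{\Delta_{j,t}}\|\hat{\bm{z}}_{t_0}-\bm{z}^\ast_{t_0}\|_2 .
\]
Since $L_{\mathrm{sim}}\le 1$ and $\Delta_{j,t_0}=0$, Assumption~\ref{assump:bounded_compression_error} bounds the squared quantity by $\eta_{\mathrm{QR}}$. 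This yields the $2\eta_{\mathrm{QR}}$ term.

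\textbf{Mode-mismatch term.} Here $\tilde{\bm{z}}$ and $\bm{z}^\ast$ start from the same state and differ only in the modes used at each step. Let $e_\tau=\|\tilde{\bm{z}}_\tau-\bm{z}^\ast_\tau\|_2$. We insert the hybrid point $f_{\mathrm{simulate}}(\bm{z}^\ast_\tau,\hat{\bm{Y}}^{(s_{j,\tau})}_{j,\tau},\hat{\bm{z}}^{\mathrm{map}})$ and apply the triangle inequality. Assumption~\ref{assump:simulator_stability} handles the state difference and Assumption~\ref{assump:bounded_mode_divergence} handles the mode difference, giving the recursion
\[
e_{\tau+1}\le L_{\mathrm{sim}}e_\tau+\delta_{\mathrm{mode}}\le e_\tau+\delta_{\mathrm{mode}},\qquad e_{t_0}=0 .
\]
Unrolling gives $e_t\le\Delta_{j,t}\delta_{\mathrm{mode}}$, and squaring gives the $2\Delta_{j,t}^2\delta_{\mathrm{mode}}^2$ term. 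The bound then holds for every realization of the mode sequence, and hence also in expectation under~\eqref{eq:probability}. Assumption~\ref{assump:planner_lipschitz} is not needed here; it matters only later, when transferring the bound to planning error.

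\textbf{Main obstacle.} The delicate step is justifying the modelling of the ideal state. The argument requires that $\hat{\bm{z}}^{\mathrm{agg},\ast}_{j,\tau}$ be generated by $f_{\mathrm{simulate}}$ under the true modes, or at least that its deviation from such a propagation is absorbed elsewhere, and that Assumption~\ref{assump:bounded_compression_error} applies at $t_0$. I would make this explicit as the interpretation of ``idealized DT'' stated after~\eqref{eq:objective}. I would also note that the bound is loose: the compression error is not amplified, because $L_{\mathrm{sim}}\le1$, whereas the mode-mismatch error grows linearly in the staleness.
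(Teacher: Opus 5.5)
Your proof is correct and reaches exactly the paper's bound, but it is organized differently.

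The paper runs a single recursion on the total error $e_k=\|\hat{\bm{z}}_{j,\tau}^{\mathrm{agg}}-\hat{\bm{z}}_{j,\tau}^{\mathrm{agg},\ast}\|_2$. At each unqueried step it inserts the hybrid point $f_{\mathrm{simulate}}(\hat{\bm{z}}_{j,\tau}^{\mathrm{agg}},\hat{\bm{Y}}_{j,\tau}^{(p^{\ast})},\hat{\bm{z}}^{\mathrm{map}})$, i.e. DT state with the true mode. This gives $e_{k+1}\le e_k+\delta_{\mathrm{mode}}$ with base case $e_0\le\sqrt{\eta_{\mathrm{QR}}}$. Unrolling yields $e_k\le\sqrt{\eta_{\mathrm{QR}}}+k\delta_{\mathrm{mode}}$, and only then does the paper square using $(a+b)^2\le 2a^2+2b^2$.

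You instead split the error globally through the shadow trajectory $\tilde{\bm{z}}$. The compression error is then merely transported by non-expansiveness. The mode-mismatch recursion starts from zero, with its hybrid point $f_{\mathrm{simulate}}(\bm{z}^\ast_\tau,\hat{\bm{Y}}^{(s_{j,\tau})}_{j,\tau},\hat{\bm{z}}^{\mathrm{map}})$ placed on the ideal side.

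Both routes pass through the same unsquared intermediate bound $\sqrt{\eta_{\mathrm{QR}}}+\Delta_{j,t}\delta_{\mathrm{mode}}$. Your vector-level factor-2 split is equivalent to squaring the triangle inequality, so neither version is sharper. The paper's single recursion is shorter. Yours makes the two error sources explicit and shows that Assumption~\ref{assump:bounded_compression_error} is used only once, at $t_0$.

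The modelling issue you flag as the main obstacle is settled in the paper by simply positing that the idealized state obeys $\hat{\bm{z}}_{j,\tau+1}^{\mathrm{agg},\ast}=f_{\mathrm{simulate}}(\hat{\bm{z}}_{j,\tau}^{\mathrm{agg},\ast},\hat{\bm{Y}}_{j,\tau}^{(p^{\ast})},\hat{\bm{z}}^{\mathrm{map}})$. As in your argument, the paper feeds the same $\hat{\bm{Y}}_{j,\tau}$ to both branches, even though it is itself computed from the DT's state. You are also right that Assumption~\ref{assump:planner_lipschitz} plays no role here; the paper's proof does not use it either.
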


\begin{proof}
    % See Appendix \ref{proof:Lemma1}.
    See Appendix A.
\end{proof}

%\vspace{4pt}
Lemma~\ref{lem:drift} reveals a clean separation that the DT's state error decomposes into an irreducible compression term $\eta_{\mathrm{QR}}$ and a linear drift term $\Delta_{j,t}^2\delta_{\mathrm{mode}}^2$ that accumulates over unqueried frames. 
Then, we can obtain the upper bound of the gap between the DT constructed via the proposed query-response state synchronization method and an idealized DT, which is shown in Theorem~\ref{thm:equivalence}.
\begin{theorem}\label{thm:equivalence}
Under \assumpref{assump:planner_lipschitz}--6 and Lemma~\ref{lem:drift}, the upper bound of the gap between the planning trajectory of the DT constructed via the proposed query-response state synchronization method and the idealized DT is given by
\begin{equation}\label{eq:decomp_identity}
\sum_{i=1}^{N}\mathbb{E}\bigl[\|\hat{\bm{\tau}}_{i, t}-\bm{\tau}_{i,t}^{\ast}\|^{2}\bigr] \;\leq\; \sum_{j=1}^{M_t} \bigl[\alpha_{j,t} + \textcolor{black}{\varepsilon(\Delta_{j,t})}\bigr],
\end{equation}
where $\alpha_{j,t}= \sum_{i=1}^{N}\sum_{p=1}^{P}\hat{\pi}_{j,t}^{p}\,\bigl\|\hat{\bm{\tau}}_{i,t}^{(p_j)} - \hat{\bm{\tau}}_{i,t}\bigr\|^{2}$ is the trajectory sensitivity that measures obstacle~$j$'s instantaneous mode uncertainty impact on other vehicles' trajectories.
$\varepsilon(\Delta_{j,t})\triangleq N L_{\mathrm{plan}}^{2}\bigl(2\eta_{\mathrm{QR}} + 2\Delta_{j,t}^{2}\cdot\delta_{\mathrm{mode}}^{2}\bigr)$ is the DT drift error bound established in Lemma~\ref{lem:drift}.
\end{theorem}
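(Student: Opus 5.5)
The plan is to split the planning error for each vehicle $i$ into a mode-uncertainty part and a state-drift part. We compare both $\hat{\bm{\tau}}_{i,t}$ and $\bm{\tau}_{i,t}^{\ast}$ with an intermediate trajectory $\tilde{\bm{\tau}}_{i,t}$. This trajectory is planned by $f_{\mathrm{plan}}$ with the DT's realized mode selection. Its context embedding is the one induced by the idealized aggregated state $\hat{\bm{z}}_{t}^{\mathrm{agg},\ast}$, while the trajectory hypotheses $\hat{\bm{Y}}_t$ are kept fixed. This gives
\begin{equation*}
\hat{\bm{\tau}}_{i,t}-\bm{\tau}_{i,t}^{\ast}=\bigl(\hat{\bm{\tau}}_{i,t}-\tilde{\bm{\tau}}_{i,t}\bigr)+\bigl(\tilde{\bm{\tau}}_{i,t}-\bm{\tau}_{i,t}^{\ast}\bigr).
\end{equation*}
We then expand the squared norm under the expectation over the sampled modes $s_{j,t}$. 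The first difference depends only on the context-embedding error, and the second only on the mode realizations. We aim to show the cross term vanishes, or is absorbed, because the mode deviations are centered around the nominal mixture-mean. If a residual cross term survives, the fallback is $\|a+b\|^2\le 2\|a\|^2+2\|b\|^2$, with the factor $2$ absorbed into the constants. The constants in $\varepsilon$ already carry a factor $2$ from Lemma~\ref{lem:drift}, so this looseness is consistent with the statement.

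\textbf{Mode-uncertainty term.} We apply Assumption~\ref{assump:planner_separability} to write $\tilde{\bm{\tau}}_{i,t}-\bm{\tau}^{\ast}_{i,t}$ as a sum over obstacles of per-obstacle deviations $\hat{\bm{\tau}}^{(p_j)}_{i,t}-\hat{\bm{\tau}}_{i,t}$. Here each $\hat{\bm{\tau}}^{(p_j)}_{i,t}$ is centered at the mixture-mean plan $\hat{\bm{\tau}}_{i,t}$. Assumption~\ref{assump:obstacle_independence} says the modes of different obstacles are independent. Taking the expectation of the squared norm, the cross terms between distinct obstacles $j\neq \ell$ then factor into products of first moments. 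We treat these as zero, to first order, since the deviations are measured around the mixture-mean. What remains is a sum of per-obstacle second moments $\sum_p \hat{\pi}^p_{j,t}\|\hat{\bm{\tau}}^{(p_j)}_{i,t}-\hat{\bm{\tau}}_{i,t}\|^2$. The ground-truth mode $p_j^\star$ is unknown to the DT, so we treat it as distributed according to the DT's belief $\hat{\bm{\pi}}_{j,t}$. Summing over $i$ gives exactly $\sum_j \alpha_{j,t}$.

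\textbf{Drift term.} We use Assumption~\ref{assump:planner_lipschitz} to bound $\|\hat{\bm{\tau}}_{i,t}-\tilde{\bm{\tau}}_{i,t}\|^2\le L_{\mathrm{plan}}^2\|\hat{\bm{z}}^{\mathrm{ctx}}_t-\hat{\bm{z}}^{\mathrm{ctx},\ast}_t\|^2$. The context error is then controlled by the per-obstacle aggregated-state errors. We will need $f_{\mathrm{moti}}$ to map aggregated states to context without expansion, and to do so additively across obstacles, which is in the spirit of Assumption~\ref{assump:planner_separability}. Under this, the context error is at most $\sum_j\|\hat{\bm{z}}^{\mathrm{agg}}_{j,t}-\hat{\bm{z}}^{\mathrm{agg},\ast}_{j,t}\|^2$. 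Each summand is bounded by $2\eta_{\mathrm{QR}}+2\Delta_{j,t}^2\delta_{\mathrm{mode}}^2$ from Lemma~\ref{lem:drift}. Summing over the $N$ vehicles produces the factor $N$, giving $\sum_j\varepsilon(\Delta_{j,t})$.

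\textbf{Main obstacle.} The delicate step is the link from the per-obstacle aggregated-state errors to the context-embedding error. Assumption~\ref{assump:planner_lipschitz} is stated in the context embedding, not in the aggregated states. I would first try to argue that the separability in Assumption~\ref{assump:planner_separability} lets each obstacle's contribution to the context be treated independently with unit Lipschitz constant, perhaps absorbed into $L_{\mathrm{plan}}$. If that fails, I would state the needed non-expansiveness of $f_{\mathrm{moti}}$ as an implicit modeling step. A secondary issue is justifying the vanishing of the cross terms. Here I would rely on Assumption~\ref{assump:obstacle_independence} together with the centering around the mixture-mean, accepting first-order equality.
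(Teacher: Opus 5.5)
Your drift paragraph matches the paper's Term~B argument: Lipschitz in the context, a per-obstacle additive split of the context error (which the paper also just asserts), Lemma~1, and the factor $N$. Your independence-plus-centering treatment of the inter-obstacle cross terms is actually better matched to the statement than the paper's step. The paper applies Cauchy--Schwarz, obtains $M_t\sum_j\alpha_{j,t}$, and the factor $M_t$ silently disappears in its final line; your first-order centering is what gives coefficient one.

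The genuine gap is your choice of intermediate. You put the \emph{idealized} context into $\tilde{\bm{\tau}}_{i,t}$, so your mode term $\tilde{\bm{\tau}}_{i,t}-\bm{\tau}^{\ast}_{i,t}$ compares two plans at $\hat{\bm{z}}^{\mathrm{ctx},\ast}_t$. Assumption~2 applied there gives deviations $f_{\mathrm{plan}}(\hat{\bm{z}}^{\mathrm{ctx},\ast}_t,\tilde{\bm{Y}}^{(p_j)}_t,\bm{g}_{i,t})-f_{\mathrm{plan}}(\hat{\bm{z}}^{\mathrm{ctx},\ast}_t,\bar{\bm{Y}}_t,\bm{g}_{i,t})$. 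That is a sensitivity evaluated at the idealized context, not $\alpha_{j,t}$, which is built from $\hat{\bm{\tau}}^{(p_j)}_{i,t}$ and $\hat{\bm{\tau}}_{i,t}$ computed with the DT's own stale $\hat{\bm{z}}^{\mathrm{ctx}}_t$. Relabelling these deviations as $\hat{\bm{\tau}}^{(p_j)}_{i,t}-\hat{\bm{\tau}}_{i,t}$ is unjustified: Assumption~4 controls a change of context at fixed trajectories, not how the mode sensitivity itself varies with the context. The paper takes the other corner, $\tilde{\bm{\tau}}_{i,t}=f_{\mathrm{plan}}(\hat{\bm{z}}^{\mathrm{ctx}}_t,\tilde{\bm{Y}}_t^{(p_1^\star,\dots,p_{M_t}^\star)},\bm{g}_{i,t})$ (stale context, ground-truth modes). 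Its Term~A then lives at the DT's context and yields $\alpha_{j,t}$ exactly, while Term~B is a pure context change at fixed modes.

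Your fallback also does not prove the stated inequality. The bound $\|a+b\|^2\le 2\|a\|^2+2\|b\|^2$ gives $2\sum_j[\alpha_{j,t}+\varepsilon(\Delta_{j,t})]$. The factor $2$ inside $\varepsilon$ was already consumed in Lemma~1 when squaring $\sqrt{\eta_{\mathrm{QR}}}+\Delta_{j,t}\delta_{\mathrm{mode}}$, and $\alpha_{j,t}$ has no slack. So the coefficient-one bound stands or falls with the drift--mode cross term vanishing, and here the two corners trade off.
\begin{itemize}
\item In your corner, $\hat{\bm{\tau}}_{i,t}-\tilde{\bm{\tau}}_{i,t}$ does not depend on $p^\star$ and the mode term is centered to first order. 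The cross term does vanish, but you get the wrong sensitivity.
\item In the paper's corner you get the right $\alpha_{j,t}$, but both terms depend on $p^\star$. Even for a planner affine in the trajectories, $f_{\mathrm{plan}}(\bm{z},\bm{Y})=c(\bm{z})+G(\bm{z})\bm{Y}$, the cross term is $-\mathbb{E}\langle G(\hat{\bm{z}}^{\mathrm{ctx}}_t)\xi,(G(\hat{\bm{z}}^{\mathrm{ctx}}_t)-G(\hat{\bm{z}}^{\mathrm{ctx},\ast}_t))\xi\rangle$ with zero-mean $\xi=\tilde{\bm{Y}}_t^{(p_1^\star,\dots,p_{M_t}^\star)}-\bar{\bm{Y}}_t$. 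This is nonzero unless the slope is context-independent.
\end{itemize}
The paper does not address this: it calls the two terms ``orthogonal'' and writes $\|a+b\|^2\le\|a\|^2+\|b\|^2$.

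Either way, the missing ingredient is a context--mode non-interaction condition: the planner's mode sensitivity must not depend on the context embedding, the analogue of the inter-obstacle non-interaction in Assumption~2. If you state it explicitly, it closes your argument in your own corner, because it identifies the idealized-context sensitivity with $\alpha_{j,t}$. It equally closes the paper's argument, at the same first-order level of rigor as the rest.
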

\begin{proof}
    % See Appendix~\ref{app:proof_main}.
    See Appendix~B.
\end{proof}

From Theorem~\ref{thm:equivalence}, we observe that the expected trajectory planning gap is jointly upper-bounded by two coupled factors: (i) the trajectory sensitivity $\alpha_{j,t}$, which captures the instantaneous mode uncertainty of each obstacle $j$ as well as its impact to each vehicle $i$ by DT's simultaion, and \textcolor{black}{(ii) the staleness drift $\varepsilon(\Delta_{j,t})$, which grows monotonically as the DT's belief becomes outdated. When the DT has not queried obstacle~$j$ for $\Delta_{j,t}$ frames, its internal simulation accumulates errors in that obstacle's predicted trajectory, which in turn propagates to the planned trajectories of all vehicles interacting with it.}

Then, the optimization problem~\eqref{eq:objective} can be rewritten as minimizing the per-obstacle sensitivity and drift under communication resource constraints as
\begin{equation}\label{eq:objective2}
    \begin{aligned}
    \min_{\bm{u}_t,\, \{\bm{q}_{i,j,t}\},\, \boldsymbol{A}_{t}} \sum_{t\in\mathcal{T}}\sum_{j\in\mathcal{M}_t}
    \bigl[\alpha_{j,t}+\textcolor{black}{\varepsilon(\Delta_{j,t})}\bigr],
    \end{aligned}
\end{equation}
\begin{align}\label{c2}
    \setlength{\abovedisplayskip}{-20 pt}
    \setlength{\belowdisplayskip}{-20 pt}
    {\rm{s.t.}} \,\,
    & \eqref{eq:c1}-\eqref{eq:c4}. \notag
\end{align}

Based on the simplified problem, we can further decouple the joint optimization into two steps: 
\begin{enumerate}
    \item DT should select a subset of obstacles $j$ for state synchronization (i.e., decide $u_{i,j,t}$) to minimize the sensitivity $\alpha_{j,t}$ and \textcolor{black}{drift $\varepsilon(\Delta_{j,t})$};
    \item DT should also decide the query design (i.e., optimize $\bm{q}_{i,j,t}$) while considering the communication constraint (i.e., optimize $\bm{A}_{t}$).
\end{enumerate}
This decoupling can reduce computational burden by avoiding joint search over the massive space of all possible obstacles across all vehicles.

\section{Method Design}\label{sec:query}

In this section, our goal is to propose a novel query-response algorithm that enables DT to actively minimize the per-obstacle sensitivity and drift under communication resource constraints through determining $\bm{u}_t, \{\bm{q}_{i,j,t}\}, \boldsymbol{A}_{t}$ based on its simulation.
% Building on the problem decomposition in Theorem~\ref{thm:equivalence}, the proposed method consists of five components: a K-step trajectory sensitivity estimator, a progressive query-response mechanism with spatial envelope grounding, a cross-timestep predictive query dispatch mechanism that decouples query transmission from response computation, a greedy pipeline scheduler tied to the reformulated objective~\eqref{eq:objective2}, and event-triggered birth/death notification.

\subsection{Trajectory Sensitivity and Drift Error Estimation}\label{sec:sensitivity}

The reformulated objective problem~\eqref{eq:objective2} requires the DT to select a subset of obstacles for state synchronization to minimize the sum of per-obstacle trajectory sensitivities $\alpha_{j,t}$ and drift errors $\varepsilon(\Delta_{j,t})$ under communication constraints.
However, direct evaluation of $\alpha_{j,t}$ demands $M_t \times N \times P$ planner rollouts per frame, which is prohibitive for real-time deployment. 

We address this challenge with a predictor that can estimate per-step sensitivity and drift error over the full scheduling horizon. 
Given the DT's current belief about obstacle~$j$, the predictor produces
\begin{equation}\label{eq:sensitivity_kstep}
  \hat{\bm{\mu}}_{j,t}
  = Q_\psi\bigl(\hat{\bm{z}}_{j,t}^{\mathrm{agg}},\;\hat{\bm{Y}}_{j,t},\;\hat{\bm{\pi}}_{j,t},\;\hat{\bm{z}}_{j,t}^{\mathrm{motion}}\bigr),
\end{equation}
where $\hat{\bm{\mu}}_{j,t}=
  \bigl(\hat{\mu}^{(0)}_{j,t},\dots,\hat{\mu}^{(N_{\max})}_{j,t}\bigr)$ is the predicted trajectory sensitivity $\alpha_{j,t+K}$ at $K$ steps into the future and $N_{\max}$ is the maximum prediction steps.

The predicted per-obstacle scheduling score combines the learned sensitivity with the closed-form drift bound $\varepsilon(\cdot)$ from Theorem~\ref{thm:equivalence} as
\begin{equation}\label{eq:score_future}
  \hat{\alpha}^{(K)}_{j,t} \;=\; \hat{\mu}^{(K)}_{j,t} + c_1 + c_2 \cdot (\Delta_{j,t} + K)^2,
\end{equation}
where $c_1 = 2 N L_{\mathrm{plan}}^2 \eta_{\mathrm{QR}}$ and $c_2 = 2 N L_{\mathrm{plan}}^2 \delta_{\mathrm{mode}}^2$ absorb the constants from Lemma~\ref{lem:drift}, and $\Delta_{j,t} + K$ is the total staleness at the query execution frame $t+K$. Both $c_1$ and $c_2$ are hyperparameters introduced by the DT drift and can be obtained from an offline experiment.

The predictor is trained offline on logged driving sequences via mean squared error against ground-truth sensitivity, which is given by
\begin{equation}\label{eq:critic_loss_kstep}
  \mathcal{L}_{Q_\psi} \;=\; \sum_{j=1}^{M_{t}}\sum_{K=0}^{N_{\max}}
  \bigl(\hat{\mu}^{(K)}_{j,t} - \alpha_{j,t+K}\bigr)^2.
\end{equation}

\subsection{Progressive and Cross-Timestep Query-Response Mechanism}\label{sec:query_mech}

Given the predicted sensitive and DT drift scores $\hat{\alpha}^{(K)}_{j,t}$ in (\ref{eq:score_future}), the DT must actively query vehicles to obtain fresh observations of high-scoring obstacles. 
However, two key challenges have emerged during the querying process.
First, the DT's belief may have drifted from the physical reality, leading to a mismatch between the obstacle DT needs to query and the physical obstacle in reality.
Second, the end-to-end query-response latency within a single time slot is unacceptable, which contains DT dispatching a query, the vehicle computing a response, and the DT updating its belief.

We address the first problem using a pair of progressive query vectors, which contain a coarse query $\mathcal{G}^{(K)}_{i,j,t}$ for obstacle location and a fine query $\bm{q}_{i,j,t}$ after obstacle selection. 
Then, to address the second problem, we propose a \emph{cross-timestep query dispatch} mechanism, which enables the DT to issue queries for future time steps, i.e., $t+K$, in advance, thereby avoiding query–response operations within the same time slot.
Specifically, at frame $t$, the DT may issue a predictive query bundle to vehicle~$i$ for obstacle~$j$ for the state $K$ slot letter, which is given by
\begin{equation}\label{eq:query_bundle}
  \mathcal{Q}_{i,j,t}^{(K)} = \bigl( \mathcal{G}_{i,j,t}^{(K)},\bm{q}_{i,j,t}, K\bigr),
\end{equation}
where $\mathcal{G}_{i,j,t}^{(K)}$ is the spatial envelope used as coarse query, $\bm{q}_{i,j,t}\in\mathbb{R}^{d_q}$ is the fine query vector, and $K\in[0,N_{\max}]$ is the scheduling horizon.

In particular, the position obstacle~$j$'s position from vehicle $i$'s perspective at future timestep $t+K$ can be given by DT's simulation as
% TODO:是不是和system model定义存在重复？
\begin{equation}\label{eq:y_hat}
\hat{\bm{y}}^{(K)}_{j,t} = \sum_{\bm{s}\in\{1,\dots,P\}^K} \Pr(\bm{s})\;[\hat{\bm{Y}}_{j,t+K-1}^{(\bm{s})}]_{2},
\end{equation}
where $\bm{s}=(s_{j,t},\dots,s_{j,t+K-1})$ is the $P^K$ mode sequences over $K$ unqueried frames, each with probability shown in (\ref{eq:probability}).
Then, DT generates an envelope region of obstacle $j$ from the perspective of vehicle $i$ as
\begin{equation}\label{eq:envelope_inflation}
\begin{aligned}
  \mathcal{G}^{(K)}_{i,j,t} \;=\; \bigl(
    & \|\hat{\bm{y}}^{(K)}_{j,t} - \bm{v}_{i,t}\|_2 \pm (\kappa_\rho \sqrt{c_1 + c_2 K^2} + \eta K),\\
    \;
    & \measuredangle(\hat{\bm{y}}^{(K)}_{j,t} - \bm{v}_{i,t}) \pm (\kappa_\theta \sqrt{c_1 + c_2 K^2} + \eta K)
  \bigr),
\end{aligned}
\end{equation}
where $\kappa_\rho,\kappa_\theta,\eta$ are fixed hyperparameters, and $\measuredangle(\cdot)$ denotes the polar angle. The uncertainty term $\sqrt{c_1 + c_2 K^2}$ inflates the envelope proportionally to the drift bound.
The drift term $\eta K$ adds extra margin proportional to the scheduling delay.

Since the dispatch at frame~$t$ consumes negligible downlink resources, the delay constraint~\eqref{eq:c1} applies solely to the uplink at frame~$t+K$. The DT must ensure that the predicted uplink delay for vehicle~$i$ at the future frame $t+K$ satisfies
\begin{equation}\label{eq:predictive_delay}
  \hat{\mathcal{T}}\!({\boldsymbol{A}_{t+K}},\bm{v}_{i,t+K})
  % \hat{\tau}_{i,t+K} 
  \!\!=\!\! \frac{C}{A_{i,t+K}B_w\log_2\!\Bigl(1 \!\!+\!\! \frac{P_{\mathrm{tx}}}{B_w N_0\mathrm{PL}(\hat{d}_{i,t+K})}\Bigr)},
\end{equation}
where $\hat{d}_{i,t+K} = \|\hat{\bm{v}}_{i,t+K} - \bm{p}_{\mathrm{server}}\|_2$ is the predicted 3D distance between vehicle~$i$ and the DT server, obtained from the DT's own trajectory prediction of vehicle~$i$ via the planning function~$\hat{\bm{\tau}}_{i,t}$ in~\eqref{eq:dt_planning}.

Then, we show how the progressive query is trained with separate objectives corresponding to its distinct roles offline.

\emph{Stage~1 (Coarse Grounding).} The coarse query that embodied by the envelope $\mathcal{G}^{(K)}_{i,j,t}$ and the DT's position prediction $\hat{\bm{y}}^{(K)}_{j,t}$ that is trained to minimize the spatial prediction error between the DT's expected position and the obstacle's ground-truth location as
\begin{equation}\label{eq:loss_stage1}
  \mathcal{L}_{\mathrm{coarse}} \;=\; \mathbb{E}_{(j,t,K)}\!\Bigl[\bigl\|\bm{p}^{\mathrm{gt}}_{j,t+K} - \hat{\bm{y}}^{(K)}_{j,t}\bigr\|_2^2\Bigr],
\end{equation}
where $\bm{p}^{\mathrm{gt}}_{j,t+K}$ is the ground-truth position of obstacle~$j$ at frame $t+K$. Minimizing $\mathcal{L}_{\mathrm{coarse}}$ penalizes belief drift when $\hat{\bm{y}}^{(K)}_{j,t}$ deviates from the true position, the envelope fails to cover the correct detection, causing a matching failure.

\emph{Stage~2 (Fine Extraction).} The fine query $\bm{q}_{i,j,t}$ is trained to jointly minimize mode-classification error and the post-query trajectory sensitivity as
\begin{equation}\label{eq:loss_stage2}
\begin{aligned}
  \mathcal{L}_{\mathrm{fine}} \;&=\;
  \mathbb{E}_{(j,t)}\!\Bigl[
    \underbrace{-\log\frac{\exp\!\bigl(\langle\bm{W}_e\hat{\bm{z}}^{\mathrm{motion}}_{j,t}[p^{\star}_{j,t}],\; \bm{r}_{i,j,t}\rangle / \tau\bigr)}
            {\sum_{p=1}^{P}\exp\!\bigl(\langle\bm{W}_e\hat{\bm{z}}^{\mathrm{motion}}_{j,t}[p],\; \bm{r}_{i,j,t}\rangle / \tau\bigr)}}_{\mathcal{L}_{\mathrm{mode}}:\text{mode identification}}
    \;\\
    &+\; \lambda
    \underbrace{\sum_{i=1}^{N}\sum_{p=1}^{P} \hat{\pi}^{\mathrm{post}}_{j,p}
    \bigl\|\hat{\bm{\tau}}_{i,t}^{(j,p)} - \hat{\bm{\tau}}_{i,t}\bigr\|^2}_{\mathcal{L}_{\mathrm{sens}}:\text{post-query sensitivity}}
  \Bigr],
\end{aligned}
\end{equation}
where $\lambda>0$ balances mode identification against sensitivity reduction. The posterior $\hat{\pi}^{\mathrm{post}}_{j,p}$ is obtained via a differentiable TokenBayes update, and the gradient back-propagates through the response $\bm{r}_{i,j,t}$ to update the query parameters.
The proposed algorithm is shown in Fig. \ref{fig_algorithm}. 
\begin{figure}[t]
\includegraphics[width=\linewidth]{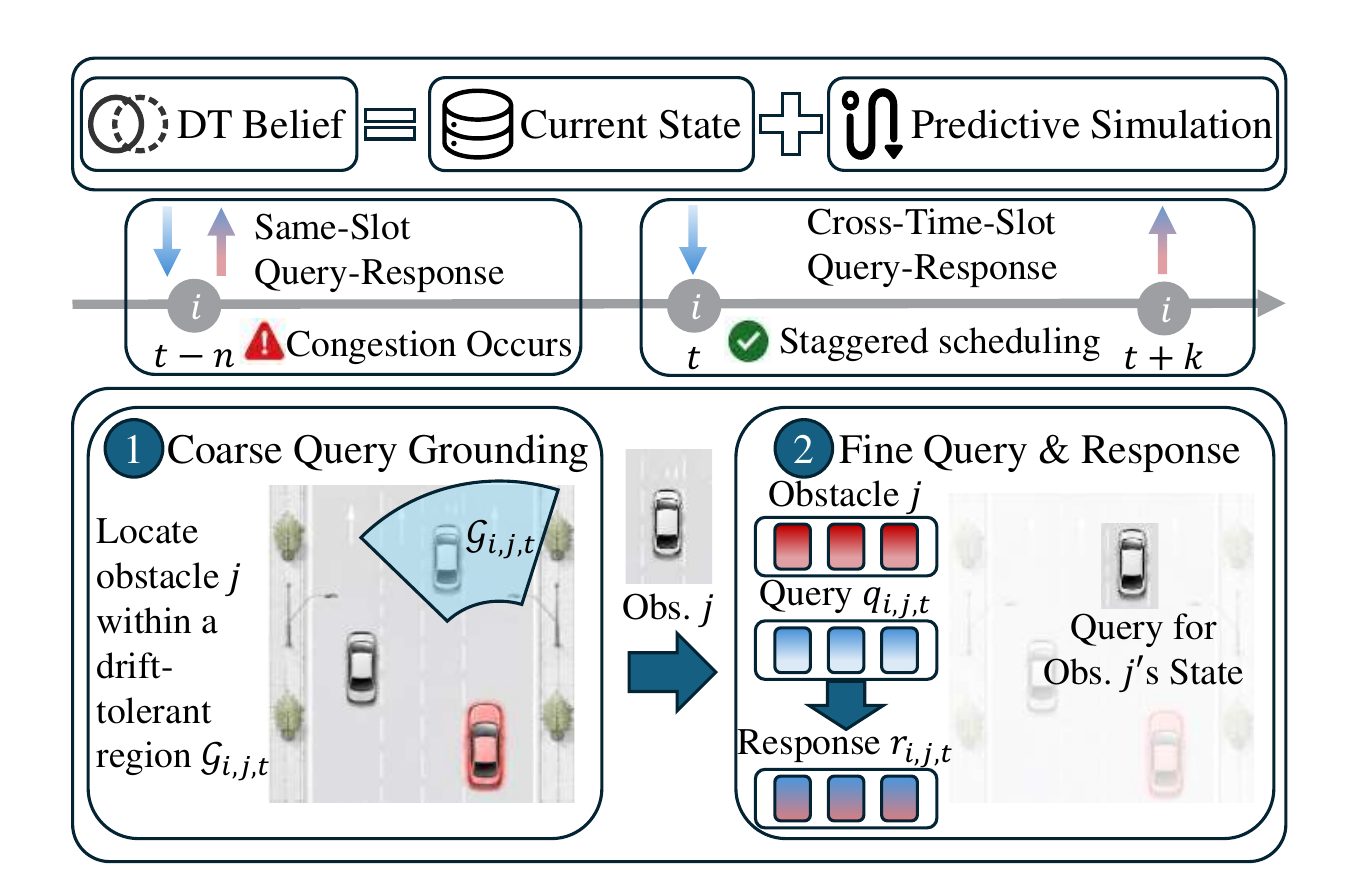}
\caption{Illustration of the proposed progressive and cross-timestep query-response mechanism.}
\label{fig_algorithm}
\end{figure}
\subsection{Joint Obstacle--Vehicle--Horizon Scheduling}
\label{sec:scheduling}

At each frame $t$, the DT jointly determines three coupled scheduling decisions: 
(i) which obstacles should be queried, (ii) which future horizon should be used for collecting the corresponding responses, and (iii) which vehicles should serve as observers. 
This joint design is necessary because an obstacle with a large planning impact may not be immediately observable by suitable vehicles, while deferring its response to a future frame also changes both the observer set and the available communication resources.

To support predictive query dispatch, we first estimate the set of vehicles that are likely to observe obstacle $j$ at a candidate future frame $t+K$ as
\begin{equation}
\label{eq:vis_hat}
\hat{\mathcal{U}}_{j,t+K}
=
\left\{
i\in\mathcal{N}
\mid
\left\|
\hat{\bm{v}}_{i,t+K}
-
\hat{\bm{y}}^{(K)}_{j,t}
\right\|_2
\le d_{\mathrm{det}}
\right\},
\end{equation}
where $\hat{\bm{y}}^{(K)}_{j,t}$ denotes the DT-predicted position of obstacle $j$ after $K$ frames, $\hat{\bm{v}}_{i,t+K}$ is obtained from the predicted trajectory of vehicle $i$, and $d_{\mathrm{det}}$ is the detector range. 
For each admitted vehicle--obstacle pair, the DT sends a query of size $C_q$ bits at frame $t$, and the selected vehicle uploads a response of size $C_r$ bits at frame $t+K$.

We next characterize the minimum RB requirement for a vehicle to transmit a given payload within the latency constraint. 
Let $\mathcal{J}\subseteq\mathcal{M}_{t'}$ be a generic obstacle set with per-obstacle payload size $C$, which enters only through the aggregate payload $|\mathcal{J}|C$. The minimum number of RBs for vehicle $i$ to transmit this payload at frame $t'$ within the latency constraint is defined as
\begin{equation}
\label{eq:amin}
A_i^{\min}\!\left(\mathcal{J};C,t'\right)
\!\!=\!\!
\min
\left\{
A\in(0,R]:
\mathcal{T}\left(A,\bm{v}_{i,t'},|\mathcal{J}|C\right)
\le
\mathcal{T}_{\max}
\right\}.
\end{equation}
Since the transmission latency decreases monotonically with the RB allocation, the above minimum is well-defined whenever the latency constraint is feasible.

Based on this RB requirement, we design a priority-aware greedy scheduler. 
Obstacles are first sorted in descending order of their predicted current-frame planning impact $\hat{\alpha}^{(0)}_{j,t}$ in~\eqref{eq:score_future}. 
For each obstacle $j$, the scheduler scans candidate horizons $K=0,1,\ldots,N_{\max}$ and attempts to admit at most $K_v$ observing vehicles from $\hat{\mathcal{U}}_{j,t+K}$. 
Let $S_j$ denote the set of vehicles already admitted for obstacle $j$. 
For a candidate observer $i\in\hat{\mathcal{U}}_{j,t+K}\setminus S_j$, the additional downlink query RB cost at frame $t$ is
\begin{equation}
\label{eq:rho_q}
\rho^{\mathrm{q}}_{i}(j)
=
A_i^{\min}\!\left(
\mathcal{J}_i^{\mathrm{q}}\cup\{j\};C_q,t
\right)
-
A_{i,t}^{\mathrm{q}},
\end{equation}
and the additional uplink response RB cost at frame $t+K$ is
\begin{equation}
\label{eq:rho_r}
\rho^{\mathrm{r}}_{i}(j,K)
=
A_i^{\min}\!\left(
\mathcal{J}_i^{\mathrm{r},(K)}\cup\{j\};C_r,t+K
\right)
-
A_{i,t+K}^{\mathrm{r}},
\end{equation}
where $\mathcal{J}_i^{\mathrm{q}}=\{j'\mid u_{i,j',t}=1\}$ and $\mathcal{J}_i^{\mathrm{r},(K)}=\{j'\mid u_{i,j',t}=1,\,K_{j'}^\ast=K\}$ denote, respectively, the sets of obstacles already assigned to vehicle $i$ as query and response loads earlier in the current greedy pass, with $K_{j'}^\ast$ the horizon committed to obstacle $j'$ by the scheduler.

The scheduler selects the candidate observer with the smallest marginal RB cost, which is given by
\begin{equation}
\label{eq:observer_select}
i^\ast
=
\arg\min_{i\in\hat{\mathcal{U}}_{j,t+K}\setminus S_j}
\left[
\rho^{\mathrm{q}}_{i}(j)
+
\rho^{\mathrm{r}}_{i}(j,K)
\right].
\end{equation}
The selected observer is admitted only if both the current-frame query budget and the future-frame response budget remain feasible.
$u_{i^\ast,j,t}=1$ hold if and only if $\sum_{i'} A_{i',t}^{\mathrm{q}}+
\rho^{\mathrm{q}}_{i^\ast}(j)\le B_t$ and $\sum_{i'} A_{i',t+K}^{\mathrm{r}}+\rho^{\mathrm{r}}_{i^\ast}(j,K)\le B_{t+K}$, and $u_{i^\ast,j,t}=0$, otherwise.
The admission process for obstacle $j$ terminates when $K_v$ observers have been admitted.

\section{Experiment}
For our simulation, we evaluate the proposed query-driven DT on the V2X-Sim~\cite {V2XSim} and nuScenes~\cite {fong2021panoptic} datasets.
The V2X-Sim dataset provides $5$ vehicles with $100$ frames each, including multi-camera RGB and LiDAR data, as well as the obstacles' trajectories and the vehicles' ground-truth boxes across $5$ scenarios. 
In our experiment, each vehicle runs the UniAD pipeline~\cite{hu2023uniad} on camera input for obstacle detection and simulation, as well as for ego trajectory planning.
The DT maintains a world model that predicts obstacle states following the UniAD pipeline based on the proposed queries-response method.
We select $5$ baselines for comparison. 
\begin{enumerate}
\item Semantic communication-based method~\cite{xu2021wireless}: It enables vehicles to compress all the detected obstacles' representations and transmit them to the DT server through a semantic encoder. Then, DT reconstructs the transmitted semantic information from a semantic decoder (labeled "Semantic" in plots).
\item Age of information (AoI) based method~\cite{zhou2026lowoverhead_dt}: It only leverages vehicles to transmit the obstacles' representation that has not been transmitted $T$ frames (labeled "AoI" in plots).
\item V2VNet based method~\cite{wang2020v2vnet}: It adapts the V2V feature-sharing protocol to a centralized architecture.
For this method, every vehicle uploads all the agent-query tokens for its detected obstacles to the DT server every frame (labeled "V2VNet" in plots).
\item Where2comm method~\cite{hu2022where2comm}: This method enables vehicles to upload only the top-$50\%$ obstacles ranked by detection confidence (labeled "Where2comm" in plots).
\item CoopTrack method~\cite{Cooptrack}: It employs a fully instance-level end-to-end framework for cooperative 3D multi-object tracking (labeled "CoopTrack" in plots).
\end{enumerate}

\subsection{Performance Evaluation}
\begin{figure}[t]
\includegraphics[width=\linewidth]{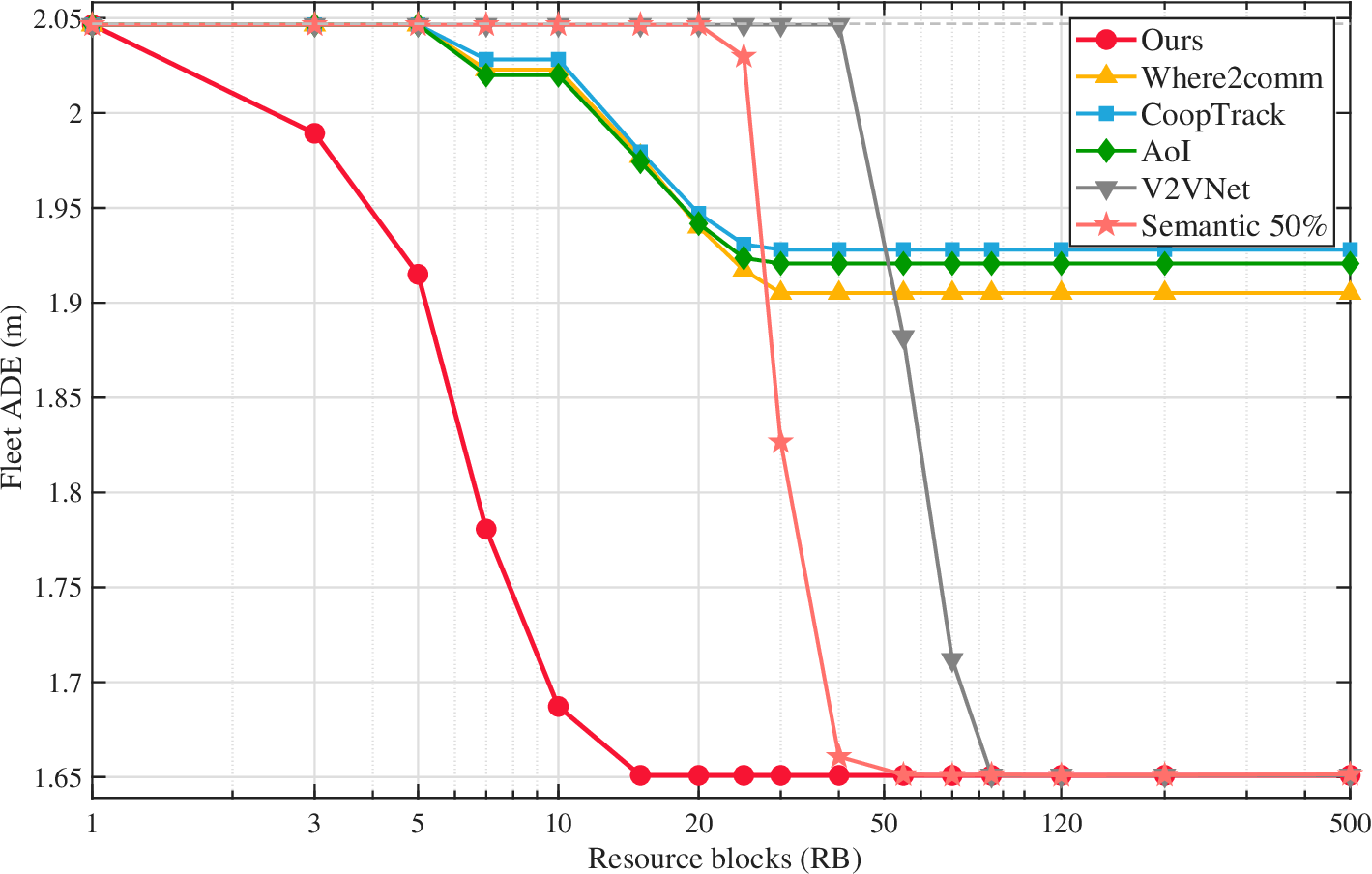}
\caption{RB v.s. Fleet ADE.}
\label{exp:fig1}
\end{figure}
Fig.~\ref{exp:fig1} shows how fleet average displacement error (ADE), which measures the trajectory error, changes as the number of RBs varies from $1$ to $500$ per frame.
From Fig.~\ref{exp:fig1}, we observe that each method's ADE decreases as the RB increases.
This is due to the fact that as the number of RBs increases, more states are uploaded at each frame, and the ADE decreases accordingly.
From Fig.~\ref{exp:fig1} we can also observe that the proposed method has the lowest ADE compared to other baselines.
This is because the proposed method enables DT to actively request the obstacles' state with the highest importance based on trajectory optimization analysis.
What's more, the query-response mechanism has much lower transmission overhead than raw-state transmission because the size of the query-response pair is much smaller than the raw-state size.
This figure also shows that the proposed method can achieve $24\%$ and $18\%$ reductions in ADE compared to the semantic method and AoI-based methods, respectively. 
This is because the proposed method offers the most flexible information-acquisition mode and can fully leverage DT’s deductive capabilities to reduce positional error in obstacles.

\begin{figure}[t]
\includegraphics[width=\linewidth]{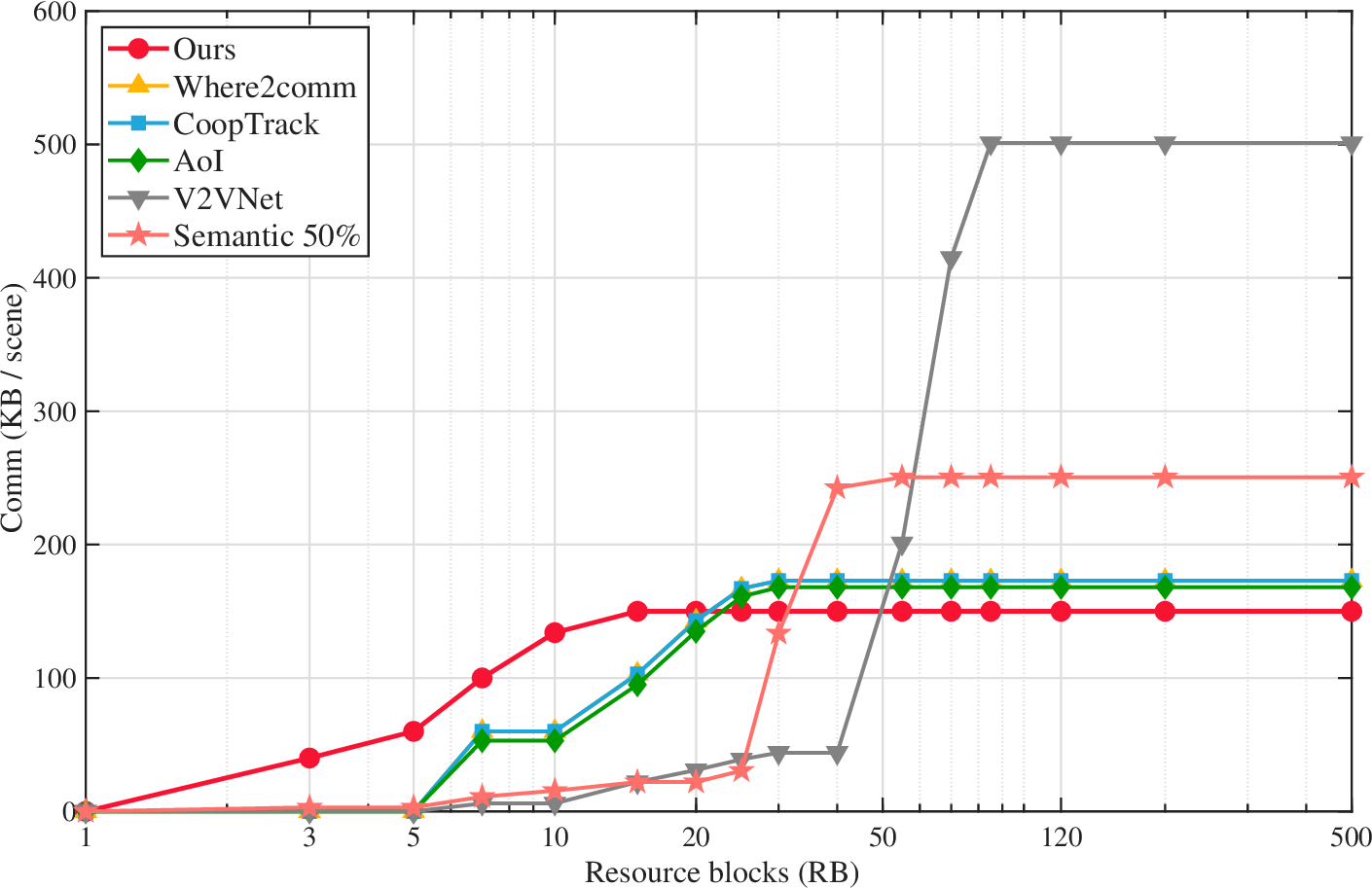}
\caption{RB v.s. Communication amount.}
\label{exp:fig2}
\end{figure}

Fig. \ref{exp:fig2} shows how the communication amount changes as the number of RBs varies.
From Fig. \ref{exp:fig2}, we can observe that as the number of RBs increases, the communication amount increases.
This is because vehicles begin uploading their status information only when the available communication resources are sufficient to support the overhead.
From this figure, we can also see that at a lower number of RBs, the proposed method shows a sharp increase in communication volume, while the baselines remain unchanged.
This is due to the fact that the proposed scheme can effectively adapt to changes in the available RBs, thereby making full use of the limited number of RBs to achieve efficient information synchronization.
From Fig. \ref{exp:fig2}, we can also observe that the proposed method achieves the lowest communication volume when sufficient RB resources are available.
This is because the proposed method fully exploits the DT's simulation capabilities and only queries partial data on demand, thereby effectively reducing communication overhead.

\begin{figure}[t]
\includegraphics[width=\linewidth]{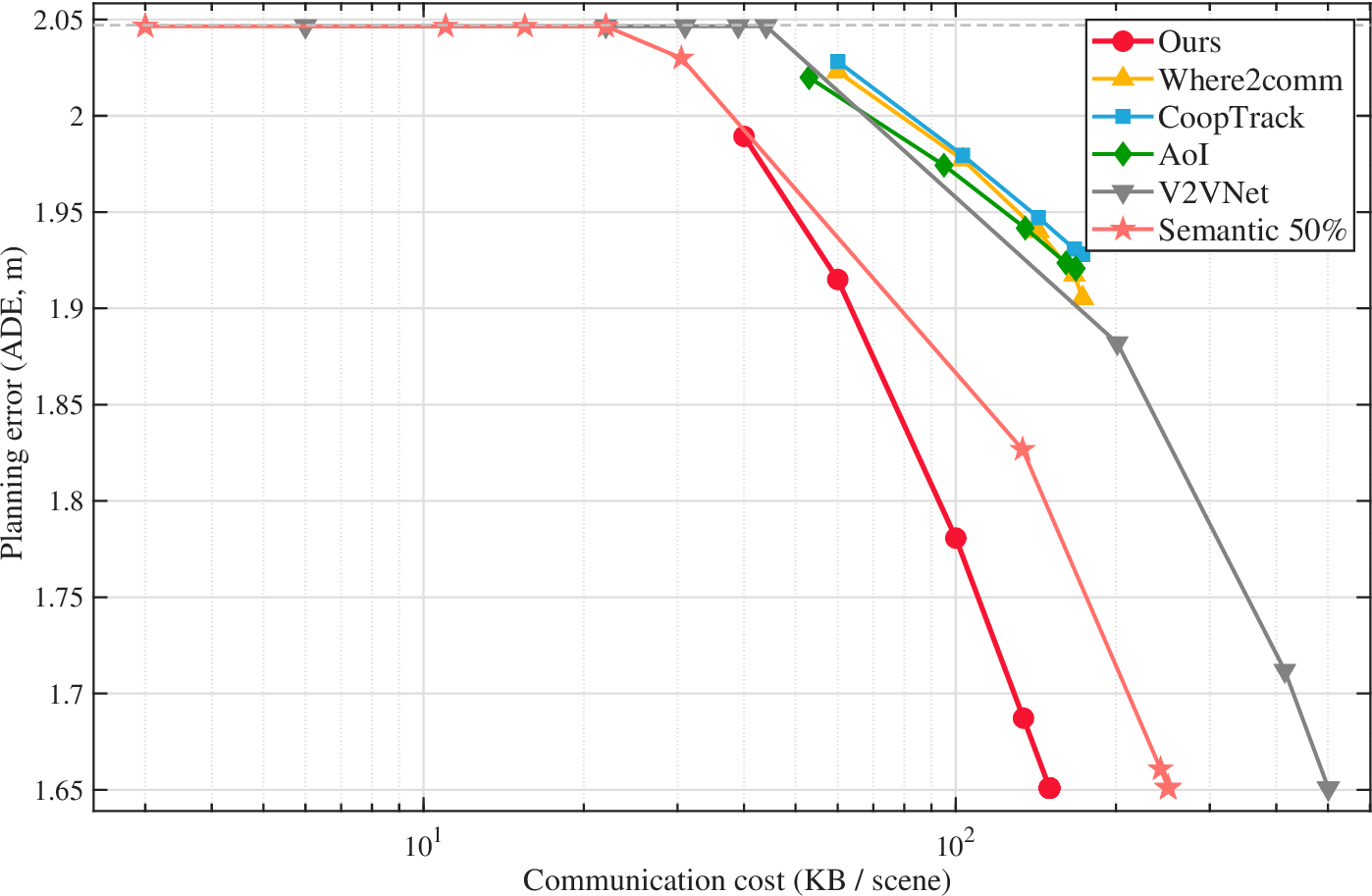}
\caption{RB v.s. Communication amount.}
\label{exp:fig3}
\end{figure}
Fig. \ref{exp:fig3} shows the Pareto trade-off of the planning ADE achieved by each scheme at every RB operating point.
From Fig. \ref{exp:fig3}, we can observe that the proposed scheme dominates the Pareto frontier.
The DT-side query schedule, combined with cross-ego token deduplication, concentrates available bandwidth on the highest-$\alpha$ obstacles across the fleet. As a result, the proposed method saturates at ADE = 1.65 m using only $150$ KB/scene, defining the minimum-comm corner of the achievable region.
Fig. \ref{exp:fig3} also shows that the push-style baselines reach the same accuracy ceiling but spend 1.7–3.3$\times$ more bandwidth. 
Their low-RB segment reveals a bandwidth-waste artifact: the per-vehicle all-or-nothing transmission protocol allows vehicles with small detection counts to broadcast while larger-load vehicles fail, producing non-zero comm without any ADE improvement. 

\begin{figure}[t]
\includegraphics[width=\linewidth]{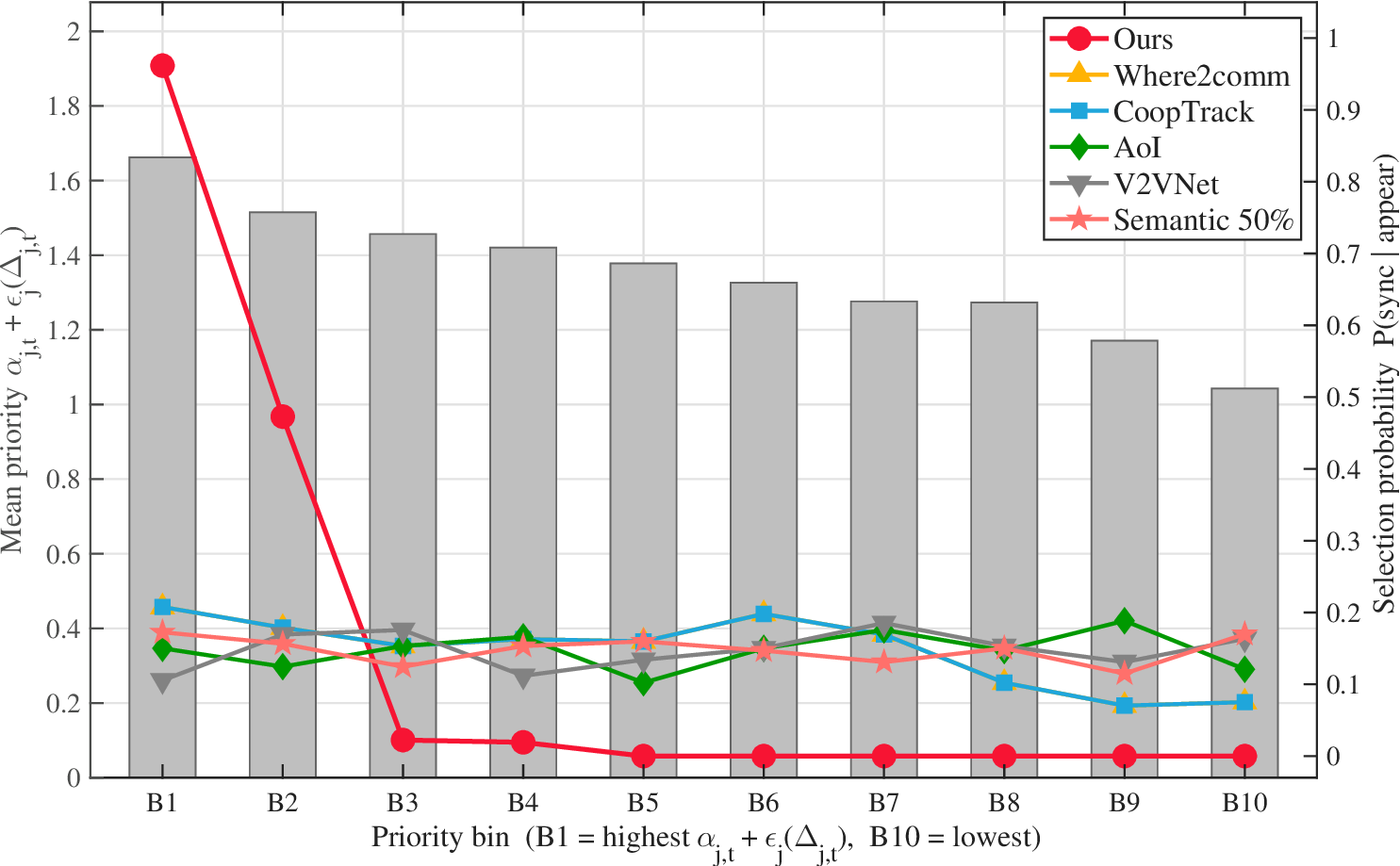}
\caption{Per-obstacle sensitivity $\hat{\alpha}_{j,t}+\varepsilon(\Delta_{j,t})$ v.s. sync frequency.}
\label{exp:fig4}
\end{figure}
Fig.~\ref{exp:fig4} shows how the selective probability change as $\hat{\alpha}_{j,t}+\varepsilon(\Delta_{j,t})$ varies at $30$ RB over $88$ frames. 
From Fig.~\ref{exp:fig4}, we can observe that the proposed method selects the obstacle with the largest $\hat{\alpha}_{j,t}+\varepsilon(\Delta_{j,t})$ while the other baselines select the obstacle average.
It is worth noting that, except for AoI, V2VNet, and the Semantic algorithm, which do not require obstacle selection, the algorithm that selects a subset of obstacles also wastes their communication resources, and the selected obstacles have little effect on the planning error.
This figure also shows that the proposed method only queries the obstacle's state when $\hat{\alpha}_{j,t}+\varepsilon(\Delta_{j,t})$ exceeds $20\%$ under the communication resource constraint.
Furthermore, due to the proposed progressive and cross-timestep query–response mechanism, the proposed method can better schedule communication resources across time slots, thereby achieving more efficient information acquisition.

\begin{figure}[t]
\includegraphics[width=\linewidth]{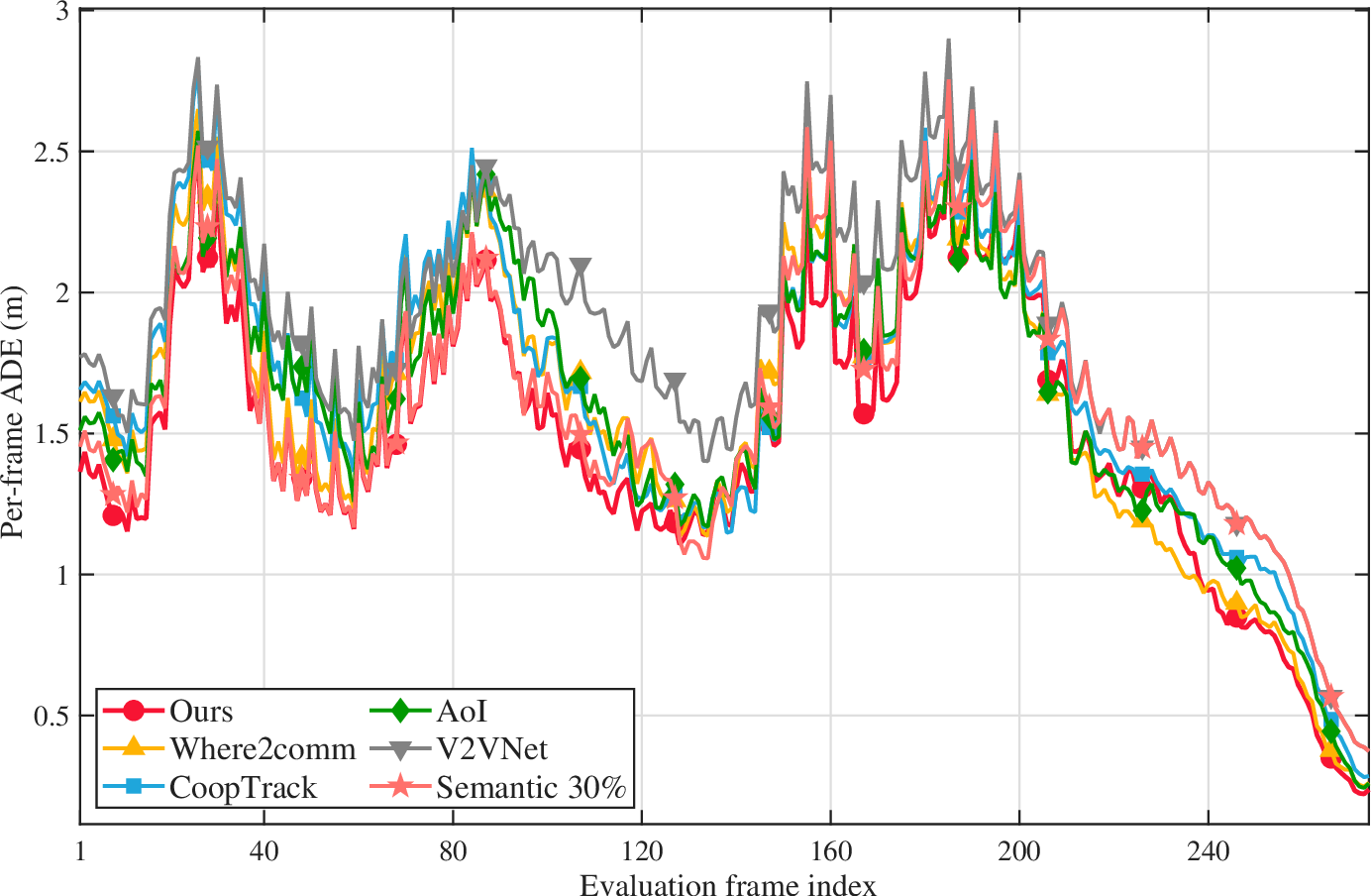}
\caption{Evaluation frame index v.s. Planning error}
\label{exp:fig5}
\end{figure}
Fig. \ref{exp:fig5} shows how the planning error changes over time steps in a continuous driving scenario. As observed from this figure, all schemes exhibit temporal fluctuations in planning error. This is because both scene complexity and driving difficulty vary continuously over time, leading to time-varying uncertainty in obstacle prediction and trajectory planning.
From Fig. \ref{exp:fig5} we can also observe that the proposed scheme consistently achieves the lowest planning error. 
This is because our scheme can effectively identify the obstacles that have the greatest impact on ADE from the perspective of DT simulation, and then precisely query their states to further reduce the simulation-induced planning error.
In contrast, the other schemes lack an effective obstacle selection mechanism and thus tend to query less informative states, resulting in larger planning errors.
We can further observe from Fig. \ref{exp:fig5} that the proposed scheme maintains a relatively stable advantage over the baselines across different time steps. This demonstrates that the proposed query-driven mechanism is not only effective in isolated scenarios but also robust under continuously changing driving conditions.

\begin{figure}[t]
\centering
\includegraphics[width=\linewidth]{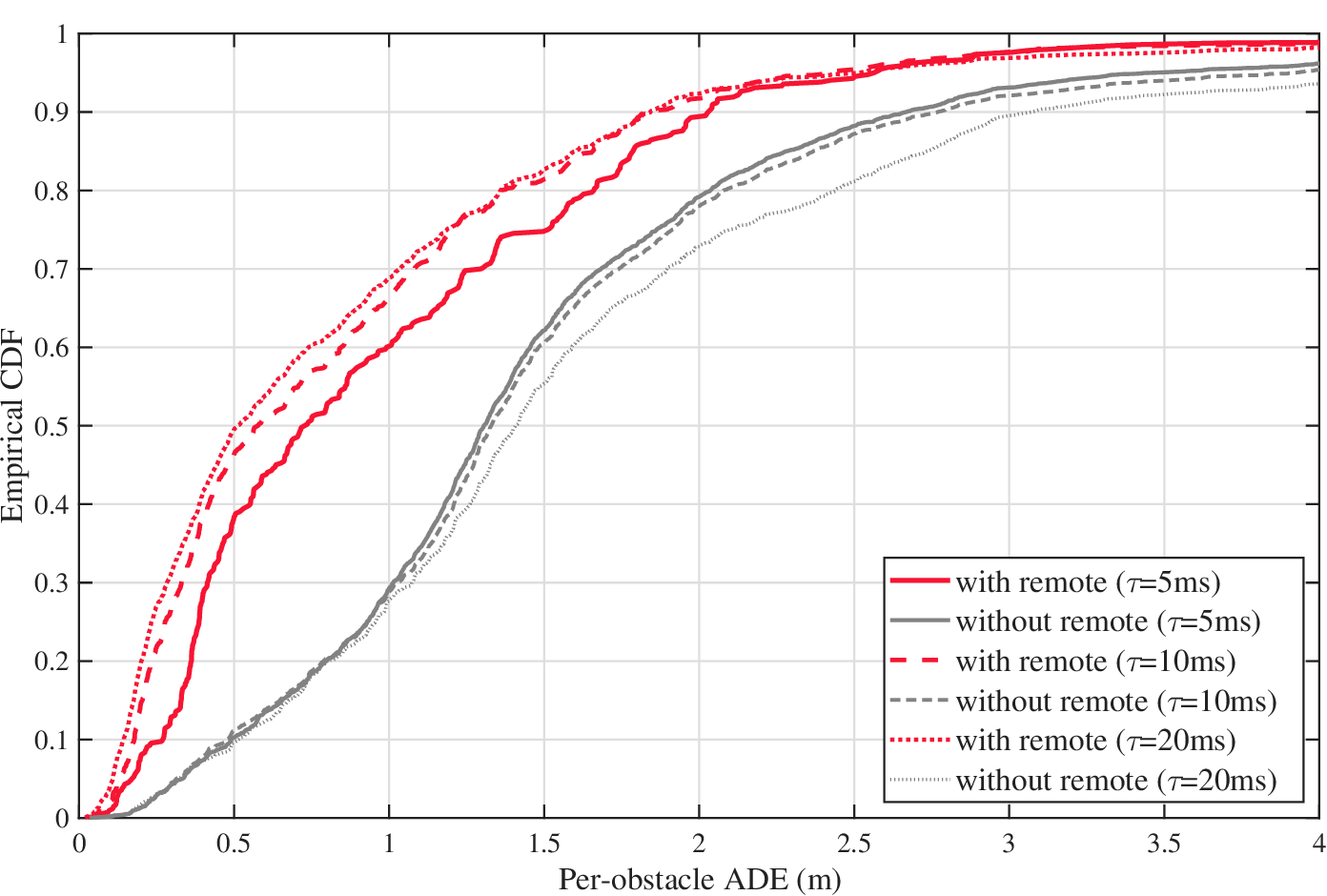}
\caption{Per-obstacle ADE empirical CDF}
\label{exp:fig6}
\end{figure}

Fig.~\ref{exp:fig6} shows the per-obstacle ADE distribution of the proposed scheme, partitioning the obstacles by whether a Stage-2 remote token arrived at the DT within the deadline. 
From Fig.~\ref{exp:fig6}, we observe that at every $\tau$ the with-remote curve lies clearly to the left of the without-remote curve, i.e., a single fine query roughly halves the per-obstacle ADE. 
This is because the response refreshes the obstacle's belief at the dispatched horizon and cancels the staleness term $\varepsilon(\Delta_{j,t})$ that otherwise accumulates over time.
We can further observe that, as the deadline budget $\tau$ grows from $5$ to $20$ms, the gap between the two curves widens.
This is because the proposed scheduler allocates the additional budget to obstacles where a fresh response yields the largest reduction in planning gap, rather than spreading queries indiscriminately.

\subsection{Ablation Study}\label{sec:ablation}
\begin{table}[t]
\centering
\caption{Module Ablation Result.}
\label{tab:ablation}
\begin{tabular}{lcccc}
\hline
\textbf{Configuration} & \textbf{ADE(m)} & \textbf{$\Delta$ADE} & \textbf{Cov.} & \textbf{$\mu_{\mathrm{wr}}$(m)} \\
\hline
Full Ours (proposed) & \textbf{1.457} & --- & \textbf{68.2\%} & \textbf{0.885} \\
$-$ no $Q_\psi$ (random) & 1.798 & $+23.4\%$ & 54.4\% & 0.919 \\
$-$ no $Q_\psi$ (entropy) & 1.606 & $+10.2\%$ & 64.4\% & 0.953 \\
$-$ no cross-attention & 1.954 & $+34.1\%$ & 72.7\% & 1.434 \\
$-$ no DT scheduling & 1.789 & $+22.8\%$ & 0.0\% & --- \\
\hline
\end{tabular}
\end{table}
To verify the contribution of each module of the proposed method, we ablate the system by replacing one module at a time with a minimal substitute as shown in Table~\ref{tab:ablation}.
In the table, coverage (Cov.) denotes the fraction of obstacles for which a Stage-2 remote token reaches the DT within the latency deadline, and $\mu_{\mathrm{wr}}$ is the mean per-obstacle ADE over these with-remote obstacles.
From Table~\ref{tab:ablation}, we observe that every module contributes substantially. Removing the cross-attention fusion head is the single largest degradation, raising ADE up to $34.1\%$ compared to the full methods, as the DT can no longer fuse multi-vehicle observations into a coherent belief.
Replacing the sensitivity predictor $Q_\psi$ with a random selector raises ADE to $23.4\%$, confirming that $Q_\psi$ is a major lever rather than a marginal refinement, whereas a mode-entropy surrogate recovers only part of this gain.
Disabling the DT-side scheduling raises ADE to $1.789$m and collapses coverage to $0\%$, since no remote token can meet the latency deadline without scheduling.
Coverage moves in tandem with $Q_\psi$, it pushes the queried subset from $54\%$ to $68\%$ of obstacles while keeping the queried-subset mean ADE the lowest, indicating that $Q_\psi$ both expands and refines the selection.

\section{Conclusion}\label{sec:conclusion}
In this paper, we propose a query-driven DT architecture in which the DT actively requests the desired state information according to its simulation result via a query-response mechanism.
To achieve this goal, we formulate an optimization problem to minimize the planning position error gap between the DT constructed via the proposed query-response method and an idealized DT.
To tackle this intractable problem, we derive an upper bound of the planning position error, expressed as the sum of per-obstacle trajectory sensitivities and a drift error that grows with information staleness. 
% Through mathematical analysis, we found that 
% Through mathematical analysis, we found that: 1) hiding critical information from neighbors can effectively prevent private data from leakage; 2) devices can verify whether the models received from neighbors have been updated as expected to identify Byzantine adversaries; 3) the convergence speed of DFL can be modeled as the characteristics of the device connection matrix which are affected by device connection scheme design.
Then, we propose a progressive, cross-timestep query-response method that dispatches coarse and fine queries to various time steps.
Finally, we proposed a joint obstacle-vehicle-horizon scheduling method that can allocate communication resources to minimize the vehicle planning error.
Experiments on multi-vehicle autonomous driving datasets demonstrate that the proposed method significantly improves trajectory planning performance under communication resource constraints.

\def\baselinestretch{1.0}
\bibliographystyle{IEEEtran}
\bibliography{references}
\newpage
\appendix
\subsection{Proof  of Lemma \ref{lem:drift}}\label{proof:Lemma1}
% ==================================================================

% We now state and prove the key lemma that bounds the DT's state reconstruction error as a function of staleness.

% \begin{lemma}[State Drift Bound]
% Let obstacle~$j$ have staleness $\Delta_{j,t}=t-\max\{\tau\leq t\mid\sum_{i\in\mathcal{U}_{j,\tau}}u_{i,j,\tau}>0\}$ as defined in~\eqref{eq:staleness}.
% Under Assumptions~\ref{ass:compression}--\ref{ass:mode_divergence} and the non-expansiveness condition $L_{\mathrm{sim}}\leq1$ of Assumption~\ref{ass:lipschitz}, the squared state reconstruction error satisfies
% \begin{equation}\label{eq:state_drift_bound}
% \|\hat{\bm{z}}_{j,t}^{\mathrm{agg}} - \hat{\bm{z}}_{j,t}^{\mathrm{agg},\ast}\|_2^2 \;\leq\; \eta_{\mathrm{QR}} + \Delta_{j,t}\cdot\delta_{\mathrm{mode}}^2,
% \end{equation}
% where $\hat{\bm{z}}_{j,t}^{\mathrm{agg},\ast}$ is the idealized aggregated state that the DT would possess under instantaneous, lossless synchronization of obstacle~$j$.
% \end{lemma}

\begin{proof}\label{proof:Lemma1}
Assume the bound holds at frame $\tau$ for staleness $\Delta_{j,\tau}=k$.
Consider frame $\tau+1$ where obstacle~$j$ not been queried ($\Delta_{j,\tau+1}=k+1$).
Let $p^{\ast}$ denote the (unknown) ground-truth motion mode of obstacle~$j$ at frame $\tau$, and let $p$ denote the mode actually used by the DT simulator (drawn from its belief $\hat{\bm{\pi}}_{j,\tau}$).
The DT propagates its belief via~\eqref{eq:mode_conditioned_belief}:
\begin{equation}
\hat{\bm{z}}_{j,\tau+1}^{\mathrm{agg}} = f_{\mathrm{simulate}}\bigl(\hat{\bm{z}}_{j,\tau}^{\mathrm{agg}},\hat{\bm{Y}}_{j,\tau}^{(p)},\hat{\bm{z}}^{\mathrm{map}}\bigr).
\end{equation}
The idealized state would propagate with the ground-truth mode:
\begin{equation}
\hat{\bm{z}}_{j,\tau+1}^{\mathrm{agg},\ast} = f_{\mathrm{simulate}}\bigl(\hat{\bm{z}}_{j,\tau}^{\mathrm{agg},\ast},\hat{\bm{Y}}_{j,\tau}^{(p^{\ast})},\hat{\bm{z}}^{\mathrm{map}}\bigr).
\end{equation}

We decompose the one-step error via the triangle inequality:
\begin{equation}\label{eq:inductive_decomp}
\begin{aligned}
&\|\hat{\bm{z}}_{j,\tau+1}^{\mathrm{agg}} - \hat{\bm{z}}_{j,\tau+1}^{\mathrm{agg},\ast}\|_2
\\\leq& \underbrace{\|f_{\mathrm{simulate}}(\hat{\bm{z}}_{j,\tau}^{\mathrm{agg}},\hat{\bm{Y}}_{j,\tau}^{(p)},\hat{\bm{z}}^{\mathrm{map}}) \!\!-\!\! f_{\mathrm{simulate}}(\hat{\bm{z}}_{j,\tau}^{\mathrm{agg}},\hat{\bm{Y}}_{j,\tau}^{(p^{\ast})},\hat{\bm{z}}^{\mathrm{map}})\|_2}_{\text{(i) mode mismatch}}\\
+ &\underbrace{\|f_{\mathrm{simulate}}(\hat{\bm{z}}_{j,\tau}^{\mathrm{agg}},\hat{\bm{Y}}_{j,\tau}^{(p^{\ast})},\hat{\bm{z}}^{\mathrm{map}})\!\!-\!\!f_{\mathrm{simulate}}(\hat{\bm{z}}_{j,\tau}^{\mathrm{agg},\ast},\hat{\bm{Y}}_{j,\tau}^{(p^{\ast})},\hat{\bm{z}}^{\mathrm{map}})\|_2}_{\text{(ii) state propagation}}.
\end{aligned}
\end{equation}

For term (i), we substituting \assumpref{assump:bounded_mode_divergence} and \assumpref{assump:simulator_stability} into~\eqref{eq:inductive_decomp}, we have
\begin{equation}
\|\hat{\bm{z}}_{j,\tau+1}^{\mathrm{agg}} - \hat{\bm{z}}_{j,\tau+1}^{\mathrm{agg},\ast}\|_2 \leq \delta_{\mathrm{mode}} + \|\hat{\bm{z}}_{j,\tau}^{\mathrm{agg}} - \hat{\bm{z}}_{j,\tau}^{\mathrm{agg},\ast}\|_2.
\end{equation}

Let $e_k \triangleq \|\hat{\bm{z}}_{j,\tau}^{\mathrm{agg}} - \hat{\bm{z}}_{j,\tau}^{\mathrm{agg},\ast}\|_2$ with staleness $k=\Delta_{j,\tau}$.
The recurrence $e_{k+1}\leq e_k + \delta_{\mathrm{mode}}$ with base case $e_0\leq\sqrt{\eta_{\mathrm{QR}}}$ (from $\Delta=0$ case) yields:
\begin{equation}
e_k \leq \sqrt{\eta_{\mathrm{QR}}} + k\cdot\delta_{\mathrm{mode}}.
\end{equation}

Squaring both sides and applying the elementary inequality $(a+b)^2\leq 2a^2+2b^2$,
\begin{equation}
\begin{aligned}
\|\hat{\bm{z}}_{j,t}^{\mathrm{agg}} - \hat{\bm{z}}_{j,t}^{\mathrm{agg},\ast}\|_2^2
&\leq 2\eta_{\mathrm{QR}} + 2\Delta_{j,t}^2\delta_{\mathrm{mode}}^2.
\end{aligned}
\end{equation}
This completes the proof.
\end{proof}

% \textbf{Remark.} The linear dependence on $\Delta_{j,t}$ in~\eqref{eq:state_drift_bound} reflects the fact that the DT simulator's error accumulates additively over unqueried frames. When a fresh query-response cycle resets $\Delta_{j,t}$ to zero, the bound collapses to the irreducible compression error $\eta_{\mathrm{QR}}$.

% ==================================================================
\subsection{Proof of Theorem~\ref{thm:equivalence}}\label{app:proof_main}
% ==================================================================

% We now prove the main decomposition:
% \begin{equation}\label{eq:goal_restated}
% \sum_{i=1}^{N}\mathbb{E}\bigl\|\hat{\bm{\tau}}_{i,t}-\bm{\tau}_{i,t}^{\ast}\bigr\|_2^{2} \;\leq\; \sum_{j=1}^{M_t}\bigl[\alpha_{j,t} + \varepsilon(\Delta_{j,t})\bigr],
% \end{equation}
% where $\varepsilon(\Delta)\triangleq N L_{\mathrm{plan}}^{2}(\eta_{\mathrm{QR}}+\Delta\cdot\delta_{\mathrm{mode}}^{2})$.
\begin{proof}
Let $p_j^{\star}$ denote the (unknown) ground-truth trajectory mode of obstacle~$j$, so that
$\bm{\tau}_{i,t}^{\ast} = \hat{\bm{\tau}}_{i,t}^{(p_1^{\star},\dots,p_{M_t}^{\star})}$.
We introduce an intermediate trajectory $\tilde{\bm{\tau}}_{i,t}^{(p_1^{\star},\dots,p_{M_t}^{\star})}$, which is the trajectory that the \emph{stale} DT would plan if it \emph{magically knew} the ground-truth modes (but still used its stale, compressed states).
Formally,
\begin{equation}
\tilde{\bm{\tau}}_{i,t}^{(p_1^{\star},\dots,p_{M_t}^{\star})}
\triangleq f_{\mathrm{plan}}\bigl(\hat{\bm{z}}_t^{\mathrm{ctx}},\; \tilde{\bm{Y}}_{t}^{(p_1^{\star},\dots,p_{M_t}^{\star})},\; \bm{g}_{i,t}\bigr),
\end{equation}
where $\hat{\bm{z}}_t^{\mathrm{ctx}}$ is the stale context embedding, and $\tilde{\bm{Y}}_{t}^{(p_1^{\star},\dots,p_{M_t}^{\star})}$ uses ground-truth modes for all obstacles.

We decompose the planning gap into two orthogonal components which is given by
\begin{equation}\label{eq:two_term_decomp}
\underbrace{\hat{\bm{\tau}}_{i,t} - \bm{\tau}_{i,t}^{\ast}}_{\text{total gap}}
\;=\;
\underbrace{\bigl(\hat{\bm{\tau}}_{i,t} - \tilde{\bm{\tau}}_{i,t}^{(p_1^{\star},\dots,p_{M_t}^{\star})}\bigr)}_{\text{Term A: pure mode uncertainty}}
\;+\;
\underbrace{\bigl(\tilde{\bm{\tau}}_{i,t}^{(p_1^{\star},\dots,p_{M_t}^{\star})} - \bm{\tau}_{i,t}^{\ast}\bigr)}_{\text{Term B: pure state error}},
\end{equation}
where term~A isolates the effect of not knowing the ground-truth modes, while using exactly the same (stale) state as the DT currently possesses.
Term~B isolates the effect of having stale or compressed states, while using exactly the ground-truth modes.
These two error sources are additive since they operate on different arguments of $f_{\mathrm{plan}}$.

Then, by rearranging \assumpref{assump:planner_separability}, we have
\begin{equation}
\hat{\bm{\tau}}_{i,t} - \tilde{\bm{\tau}}_{i,t}^{(p_1^{\star},\dots,p_{M_t}^{\star})}
\approx \sum_{j=1}^{M_t}\bigl(\hat{\bm{\tau}}_{i,t} - \hat{\bm{\tau}}_{i,t}^{(j,p_j^{\star})}\bigr).
\end{equation}

Taking the squared Euclidean norm and expectation over the random ground-truth modes $\{p_j^{\star}\}$ (drawn independently per \assumpref{assump:obstacle_independence}), we have
\begin{equation}\label{eq:termA_cauchy}
\begin{aligned}
\mathbb{E}\bigl\|\hat{\bm{\tau}}_{i,t} - \tilde{\bm{\tau}}_{i,t}^{(p_1^{\star},\dots,p_{M_t}^{\star})}\bigr\|_2^{2}
&\approx \mathbb{E}\Bigl\|\sum_{j=1}^{M_t}\bigl(\hat{\bm{\tau}}_{i,t} - \hat{\bm{\tau}}_{i,t}^{(j,p_j^{\star})}\bigr)\Bigr\|_2^{2} \\
&\leq M_t\sum_{j=1}^{M_t}\mathbb{E}\bigl\|\hat{\bm{\tau}}_{i,t} - \hat{\bm{\tau}}_{i,t}^{(j,p_j^{\star})}\bigr\|_2^{2},
\end{aligned}
\end{equation}
where the inequality applies Cauchy-Schwarz: $\|\sum_{j=1}^{M}\bm{v}_j\|_2^{2}\leq M\sum_{j=1}^{M}\|\bm{v}_j\|_2^{2}$.

Since $p_j^{\star}\sim\mathrm{Categorical}(\hat{\bm{\pi}}_{j,t})$ independently for each obstacle $j$, $\mathbb{E}\bigl\|\hat{\bm{\tau}}_{i,t} - \hat{\bm{\tau}}_{i,t}^{(j,p_j^{\star})}\bigr\|_2^{2}$ can be further rewritten as
\begin{equation}
\mathbb{E}\bigl\|\hat{\bm{\tau}}_{i,t} - \hat{\bm{\tau}}_{i,t}^{(j,p_j^{\star})}\bigr\|_2^{2}
= \sum_{p=1}^{P}\hat{\pi}_{j,t}^{p}\,\bigl\|\hat{\bm{\tau}}_{i,t} - \hat{\bm{\tau}}_{i,t}^{(j,p)}\bigr\|_2^{2}.
\end{equation}

Summing over all vehicles $i=1,\dots,N$, we obtain the per-obstacle trajectory sensitivity as
\begin{equation}\label{eq:termA_alpha}
\begin{aligned}  
\sum_{i=1}^{N}\mathbb{E}& \bigl\|\hat{\bm{\tau}}_{i,t} - \tilde{\bm{\tau}}_{i,t}^{(p_1^{\star},\dots,p_{M_t}^{\star})}\bigr\|_2^{2} \\
& \leq  M_t\sum_{j=1}^{M_t}\sum_{i=1}^{N}\sum_{p=1}^{P}\hat{\pi}_{j,t}^{p}\,\bigl\|\hat{\bm{\tau}}_{i,t}^{(j,p)} - \hat{\bm{\tau}}_{i,t}\bigr\|^{2}
\triangleq M_t\sum_{j=1}^{M_t}\alpha_{j,t}.
\end{aligned}
\end{equation}

Then, we analyze term~B in (\ref{eq:two_term_decomp}), which measures the planning error induced solely by stale state, assuming ground-truth modes are known.
Both trajectories in term~B share exactly the same trajectory inputs $\tilde{\bm{Y}}_{t}^{(p_1^{\star},\dots,p_{M_t}^{\star})}$; they differ only in the context embedding:
\begin{equation}
\begin{aligned}
\tilde{\bm{\tau}}_{i,t}^{(p_1^{\star},\dots,p_{M_t}^{\star})}
&= f_{\mathrm{plan}}\bigl(\hat{\bm{z}}_t^{\mathrm{ctx}},\; \tilde{\bm{Y}}_{t}^{(p_1^{\star},\dots,p_{M_t}^{\star})},\; \bm{g}_{i,t}\bigr),\\[4pt]
\bm{\tau}_{i,t}^{\ast}
&= f_{\mathrm{plan}}\bigl(\hat{\bm{z}}_t^{\mathrm{ctx},\ast},\; \tilde{\bm{Y}}_{t}^{(p_1^{\star},\dots,p_{M_t}^{\star})},\; \bm{g}_{i,t}\bigr),
\end{aligned}
\end{equation}
where $\hat{\bm{z}}_t^{\mathrm{ctx},\ast}$ is the idealized context embedding.

Based on \assumpref{assump:planner_lipschitz}, the context embedding is computed from the aggregated states of all obstacles via $f_{\mathrm{motion}}$ in~\eqref{eq:dt_traj_pred}.
Since the motion model operates per-obstacle, the squared norm can be decomposed as
\begin{equation}\label{eq:ctx_decomp}
\bigl\|\hat{\bm{z}}_t^{\mathrm{ctx}} - \hat{\bm{z}}_t^{\mathrm{ctx},\ast}\bigr\|_2^{2}
= \sum_{j=1}^{M_t}\bigl\|\hat{\bm{z}}_{j,t}^{\mathrm{agg}} - \hat{\bm{z}}_{j,t}^{\mathrm{agg},\ast}\bigr\|_2^{2}.
\end{equation}

Then, by applying Lemma~\ref{lem:drift} to each obstacle, we have
\begin{equation}\label{eq:apply_lemma}
\bigl\|\hat{\bm{z}}_{j,t}^{\mathrm{agg}} - \hat{\bm{z}}_{j,t}^{\mathrm{agg},\ast}\bigr\|_2^{2}
\leq \eta_{\mathrm{QR}} + \Delta_{j,t}\cdot\delta_{\mathrm{mode}}^{2}.
\end{equation}

Summing over all vehicles $i=1,\dots,N$, we can obtain:
\begin{equation}\label{eq:termB_final}
\begin{aligned} \sum_{i=1}^{N}\bigl\|&\tilde{\bm{\tau}}_{i,t}^{(p_1^{\star},\dots,p_{M_t}^{\star})} - \bm{\tau}_{i,t}^{\ast}\bigr\|_2^{2}
\leq \sum_{i=1}^{N} L_{\mathrm{plan}}^{2}\sum_{j=1}^{M_t}\bigl(2\eta_{\mathrm{QR}} \!\! + \!\! 2\Delta_{j,t}^2\!\!\cdot\!\delta_{\mathrm{mode}}^{2}\bigr)\\[4pt]
&= N L_{\mathrm{plan}}^{2}\sum_{j=1}^{M_t}\bigl(2\eta_{\mathrm{QR}} + 2\Delta_{j,t}^2\cdot\delta_{\mathrm{mode}}^{2}\bigr)
\triangleq  \sum_{j=1}^{M_t}\varepsilon(\Delta_{j,t}),
\end{aligned}
\end{equation}
where $\varepsilon(\Delta_{j,t}) \;\triangleq\; N L_{\mathrm{plan}}^{2}\bigl(2\eta_{\mathrm{QR}} + 2\Delta_{j,t}^2\cdot\delta_{\mathrm{mode}}^{2}\bigr)$.

Finally, based on ~\eqref{eq:two_term_decomp} and the triangle inequality, we have
\begin{equation}
\bigl\|\hat{\bm{\tau}}_{i,t} - \bm{\tau}_{i,t}^{\ast}\bigr\|_2^{2}
\leq \bigl\|\hat{\bm{\tau}}_{i,t} - \tilde{\bm{\tau}}_{i,t}^{(p_1^{\star},\dots,p_{M_t}^{\star})}\bigr\|_2^{2}
\;+\;
\bigl\|\tilde{\bm{\tau}}_{i,t}^{(p_1^{\star},\dots,p_{M_t}^{\star})} - \bm{\tau}_{i,t}^{\ast}\bigr\|_2^{2}.
\end{equation}
Then, by applying the bounds from \eqref{eq:termA_alpha} and \eqref{eq:termB_final}, we have
\begin{equation}\label{eq:final_assembly}
\sum_{i=1}^{N}\mathbb{E}\bigl\|\hat{\bm{\tau}}_{i,t}-\bm{\tau}_{i,t}^{\ast}\bigr\|_2^{2}\leq \sum_{j=1}^{M_t}\bigl[\alpha_{j,t} + \varepsilon(\Delta_{j,t})\bigr].
\end{equation}
This completes the proof.
\end{proof}

\end{document}